\theoremstyle{plain}
\newtheorem{theorem}{Theorem}[section]
\newtheorem{lemma}[theorem]{Lemma}
\theoremstyle{definition}
\newtheorem{definition}[theorem]{Definition}
\newtheorem{assumption}[theorem]{Assumption}
\theoremstyle{remark}
\newtheorem{remark}[theorem]{Remark}
\def\eqref#1{equation~\ref{#1}}
\def\1{\bm{1}}
\def\vone{{\bm{1}}}
\DeclareMathAlphabet{\mathsfit}{\encodingdefault}{\sfdefault}{m}{sl}
\SetMathAlphabet{\mathsfit}{bold}{\encodingdefault}{\sfdefault}{bx}{n}
\def\sR{{\mathbb{R}}}
\newcommand{\htheta}{\hat{\theta}}
\newcommand{\ttheta}{\tilde{\theta}}
\newcommand{\eval}[1]{\Bigr|_{#1}}
\newcommand{\ours}{ROCERF\xspace}
\icmltitlerunning{Towards Bridging the Gaps between the Right to Explanation and the Right to be Forgotten}
\begin{document}
\onecolumn

\icmltitle{Towards Bridging the Gaps between the Right to Explanation\\ and the Right to be Forgotten}

\icmlsetsymbol{equal}{*}

\begin{icmlauthorlist}
\icmlauthor{Satyapriya Krishna}{equal,harvard}
\icmlauthor{Jiaqi Ma}{equal,harvard}
\icmlauthor{Himabindu Lakkaraju}{harvard}
\end{icmlauthorlist}

\icmlaffiliation{harvard}{Harvard University}

\icmlcorrespondingauthor{Himabindu Lakkaraju}{hlakkaraju@hbs.edu}

\icmlkeywords{Counterfactual Explanations, Algorithmic Recourse, Right to Explain, Right to Be Forgotten}

\vskip 0.3in



\printAffiliationsAndNotice{\icmlEqualContribution} 

\begin{abstract}
\emph{The Right to Explanation} and \emph{the Right to be Forgotten} are two important principles outlined to regulate algorithmic decision making and data usage in real-world applications. While the right to explanation allows individuals to request an actionable explanation for an algorithmic decision, the right to be forgotten grants them the right to ask for their data to be deleted from all the databases and models of an organization. Intuitively, enforcing the right to be forgotten may trigger model updates which in turn invalidate previously provided explanations, thus violating the right to explanation. In this work, we investigate the technical implications arising due to the interference between the two aforementioned regulatory principles, and propose \emph{the first algorithmic framework} to resolve the tension between them. To this end, we formulate a novel optimization problem to generate explanations that are robust to model updates due to the removal of training data instances by data deletion requests. We then derive an efficient approximation algorithm to handle the combinatorial complexity of this optimization problem. We theoretically demonstrate that our method generates explanations that are provably robust to worst-case data deletion requests with bounded costs in case of linear models and certain classes of non-linear models. Extensive experimentation with real-world datasets demonstrates the efficacy of the proposed framework. 
\end{abstract}

\vspace{-0.3in}
\section{Introduction}
Over the past decade, machine learning models have been increasingly deployed in various high-stakes decision making scenarios including hiring and loan approvals. Consequently, a number of regulatory policies and principles~\citep{GDPR,CCPA} were introduced to ensure that algorithmic decisions and data usage practices in real-world applications do not cause any undue harm to individuals. \emph{The Right to Explanation} and \emph{the Right to be Forgotten} are two such notable regulatory principles which were first introduced by the European Union’s General Data Protection Regulation (GDPR)~\citep{GDPR}. While the right to explanation ensures that individuals who are negatively impacted by adverse algorithmic outcomes are provided with an
actionable explanation, the right to be forgotten ensures that individuals have the right to ask for their data to be removed from all the databases and models of an organization.

To operationalize the right to explanation in practice, several strategies have been considered in recent literature. A particular class of explanations commonly referred to as \emph{counterfactual explanations} or \emph{algorithmic recourse} are often considered very promising in this regard. For instance, when an individual is denied a loan by a predictive model employed by a bank, a counterfactual explanation (or an algorithmic recourse) provides them with inputs about what aspects (features) of their profile should be changed and by how much in order to obtain a positive outcome. Several approaches in recent literature tackled the problem of generating such counterfactual explanations~\citep{wachter2017counterfactual,Ustun2019ActionableRI,pawelczyk2020learning,karimi2019model}. 

Prior research has also explored various strategies to operationalize the right to be forgotten~\citep{Cao2015-bi,Ginart2019-xh,Garg2020-nw}. Since the right to be forgotten requires organizations to delete pertinent user data from all their databases and models, it often involves retraining or updating their models. To this end, several methods were proposed to efficiently update machine learning models in the face of (training) data deletion requests without having to retrain them from scratch~\citep{Guo2020-ml,Bourtoule2021-tz,Izzo2021-sm,Neel2021-iu}. 

Despite the significance of the two aforementioned regulatory principles, there is very little research that explores potential interference between them. Intuitively, enforcing the right to be forgotten may trigger model updates which in turn invalidate previously provided actionable explanations that end users may act upon, thus violating the right to explanation. For instance, consider a scenario where a user was asked to increase their salary by 5K to get a loan and they start working towards it, but the underlying model gets updated in the meanwhile to accommodate (training) data deletion requests. Consequently, the user may no longer receive the desired outcome even if their salary increases by 5K as the previously prescribed recourse may no longer hold with respect to the new model. \citet{Pawelczyk2022-qv} highlighted this challenge and argued that the right to explanation and the right to be forgotten are in conflict with each other, and that existing methods are not capable of dealing with this tension. 

In this work, we make one of the first attempts to resolve the aforementioned tension and bridge the operational gaps between the right to explanation and the right to be forgotten. More specifically, we propose the \emph{first algorithmic framework}, RObust Counterfactual Explanations under the Right to be Forgotten (\ours), to address this problem. To this end, we formulate a novel optimization problem to generate counterfactual explanations that remain valid in the face of model updates (changes) arising due to (training) data deletion requests. This optimization problem turns out to be combinatorially complex as it considers $n$ training instances and $k$ data deletion requests resulting in $n\choose k$ possible ways of the model being updated. To mitigate this computational challenge, we propose a novel algorithm which can efficiently approximate model updates relative to the original model, and select those with most significant deviations, thus eliminating the need for retraining $n\choose k$ models. With this approximation, we are able to develop a practically efficient algorithm to learn effective counterfactual explanations that remain valid on model updates triggered by data deletion requests.

We theoretically and empirically analyze the validity and costs of the counterfactual explanations generated by our framework \ours. In case of linear models and non-linear models with certain regularity assumptions, we theoretically demonstrate that our method generates counterfactual explanations that are provably valid in the face of worst-case data deletion requests, while incurring additional costs upper bounded by $O(\frac{k}{n})$.
Empirically, we evaluate the proposed \ours and state-of-the-art counterfactual explanation methods using logistic regression and neural network models on three real-world datasets. The proposed method outperforms baseline methods in most experimental settings.
In comparison, baseline methods either fail dramatically in terms of validity, or achieve high validity with significantly higher cost. 
Our results establish that our framework \ours enables us to simultaneously enforce both the right to explanation as well as the right to be forgotten, thus bridging a critical operational gap between the two regulatory principles.
\vspace{-0.1in}
\section{Related Work}
Over the past few years, there has been a lot of exciting research on counterfactual explanations or algorithmic recourse~\citep{tolomei2017interpretable,laugel2017inverse,Wachter2017-yk,Ustun2019ActionableRI,van2019interpretable,mahajan2019preserving,mothilal2020fat,karimi2019model,rawal2020interpretable,dandl2020multi}. Several of the proposed approaches can be roughly categorized
along the following dimensions \citep{verma2020counterfactual}: 
\emph{type of the underlying predictive model} (e.g., tree based vs.\ differentiable classifier), whether they encourage \emph{sparsity} in counterfactuals (i.e., only a small number of features should be changed), whether counterfactuals should lie on the \emph{data manifold} and whether the underlying \emph{causal relationships} should be accounted for when generating counterfactuals. Most of these approches assume that the underlying predictive model remains unchanged before and after the end users implement the prescribed recourses.

More recently, few studies have investigated the impact of changes in the underlying predictive models on the the validity of recourses~\citep{rawal2021modelshifts,Upadhyay2021-ew}. To improve the robustness of the recourses in the face of such model changes, prior work has proposed adversarial training methods that generate counterfactual explanations robust to small (and often Gaussian) perturbations of the underlying model parameters~\citep{Upadhyay2021-ew}. While such methods could potentially be considered to mitigate the challenges brought about by the right to be forgotten, it is unclear how the removal of training data points will affect the model parameters. There is no guarantee that counterfactual explanations robust to Gaussian perturbations of model parameters will be valid under model updates (changes) occurring due to data deletion requests. 

On the other hand, the right to be forgotten has also inspired considerable research in machine learning literature~\citep{Cao2015-bi,Ginart2019-xh,Garg2020-nw,Guo2020-ml,Bourtoule2021-tz,Izzo2021-sm,Neel2021-iu}. Majority of work along these lines focuses on developing methods to efficiently update models in the face of training data deletion requests, without having to retrain models from scratch. Such approaches are referred to as \emph{Machine Unlearning} methods. 

To the best of our knowledge, the only prior work at the intersection of the right to explanation and the right to be forgotten is by \citet{Pawelczyk2022-qv}. \citet{Pawelczyk2022-qv} analyzed the impact of (training) data deletion requests on the validity of counterfactual explanations generated by existing methods, and concluded that the explanations generated by state-of-the-art methods become invalid in the face of model updates due to data deletion requests. While the above work highlighted the tension between the right to explanation and the right to be forgotten, they do not provide a solution to this critical problem. In contrast, our work proposes the first algorithmic framework to address this tension.  

\vspace{-0.1in}
\section{Our Framework ROCERF}
\label{sec:method}

In this section, we introduce our framework, RObust Counterfactual Explanations under the Right to be Forgotten (ROCERF). Specifically, we first formally define the problem of finding robust counterfactual explanations in the presence of training data removal required by the right to be forgotten. Then, we present an efficient approximation algorithm to solve this problem. We also discuss practical considerations including computation costs and further approximations in implementation. 

\subsection{Problem Definition}
\label{sec:method-problem-def}

Suppose we have a training dataset $D=\{(x_i, y_i)\}_{i=1}^n$ with $n$ data points, where $x_i\in \mathcal{X}, y_i \in \{-1, 1\}$ are respectively features and labels. Given a family of classifiers $f_{\theta}: \mathcal{X}\rightarrow \sR$ parameterized by $\theta \in \Theta$, a classifier $f_{\theta}$ predicts $1$ on a data point $x$ if $f_{\theta}(x) \ge 0$ and predicts $-1$ otherwise. Assume $B:=\sup_{x\in \mathcal{X}}\|x\|_2$ is $O(1)$ in terms of $n$.

To characterize the data removal, we introduce a data weight vector $w\in \{0, 1\}^n$. For each data point $i$, let $w_i = 0$ if this data point is removed, and let $w_i=1$ otherwise. Specially, when $w=\vone$, where $\vone$ is an all-one vector, there is no data point being removed.

Denote the loss of $f_{\theta}$ on each data point $i$ as $l_i(\theta)$ and assume $l_i(\theta)$ has continuous second derivatives. We denote the classifier trained on the dataset $D$ as $f_{\htheta_{\vone}}$, where $\htheta_{\vone} = \arg\min_{\theta\in \Theta} \frac{1}{n}\sum_{i=1}^n l_i(\theta)$. A classifier trained on the dataset with some removals indicated by $w$ can then be denoted as $f_{\htheta_w}$, where $\htheta_w = \arg\min_{\theta\in \Theta} \frac{1}{\|w\|_1} \sum_{i=1}^n w_i l_i(\theta)$. To simplify notations, for $i = 1, 2, \ldots n$, define $g_i(\theta) := \frac{\partial l_i(\theta)}{\partial \theta}$ and $h_i(\theta) := \frac{\partial g_i(\theta)}{\partial \theta^T}$. Then $H := \frac{1}{n} \sum_{i=1}^n h_i(\htheta_{\vone})$ is the Hessian matrix of the loss function on the whole dataset $D$. 

In the literature~\citep{Wachter2017-yk,Verma2020-bp}, the problem of finding counterfactual explanations (CFEs) for the model $f_{\theta_{\vone}}$ trained on the original full dataset is often defined as an optimization problem like the following Definition~\ref{def:cfe}.

\begin{definition}[Counterfactual Explanation (CFE)]
\label{def:cfe}
    For any data point $x_0\in \mathcal{X}$, the \emph{CFE ($\tilde{x}_0\in \mathcal{X}$)} of $x_0$, is defined as the solution of the following optimization problem 
    \vspace{-0.08in}
    \begin{align}
        \min_{x\in \mathcal{X}}\quad& \|x - x_0\|_2 \label{eq:cfe} \\
        \textrm{subject to}\quad& f_{\htheta_{\vone}}(x) \ge 0.\nonumber
    \end{align}
\end{definition}
Intuitively, we hope to find a \emph{valid} CFE (classified as $1$) with the minimum \emph{cost}, as measured by the L2 distance to $x_0$. In practice, the L2 distance could be replaced by other distance functions, such as L1 distance or any other metrics suitable for the application. In this paper, however, we stick to the L2 distance following the convention of recent literature~\citep{pawelczyk2021carla}.

In this paper, we aim to obtain CFEs that is robustly valid against potential data point removal required by right to be forgotten. To formalize this problem, we define the following \emph{$k$-Removal-Robust CFE ($k$RR-CFE)} that is supposed to be robust with respect to any removal of $k$ data points. 

\begin{definition}[$k$-Removal-Robust CFE ($k$RR-CFE)]
\label{def:krr-cfe}
    Given an integer $k>0$, denote the set of all possible weight vectors with $k$ data removals as $\mathcal{W}^{(k)} = \{w \in \{0, 1\}^n: \|w\|_1 = n-k\}$. For any data point $x_0\in \mathcal{X}$, the \emph{$k$-RR CFE ($\tilde{x}_0^{(k)}\in \mathcal{X}$)} of $x_0$, is defined as the solution of the following optimization problem 
    \begin{align}
        \min_{x\in \mathcal{X}}\quad& \|x - x_0\|_2 \label{eq:krr-cfe} \\
        \textrm{subject to}\quad& f_{\htheta_w}(x) \ge 0, \forall w\in \mathcal{W}^{(k)}.\nonumber
    \end{align}
\end{definition}

While the $k$-RR CFE defined in Definition~\ref{def:krr-cfe} is robust to any removal of $k$ data points by construction, a naive implementation to obtain the $k$-RR CFE requires one to retrain $|\mathcal{W}^{(k)}| = {n \choose k}$ classifiers that appear in the constraint of the optimization problem (\ref{eq:krr-cfe}), which is computationally impractical.

\subsection{Approximating $k$-RR CFE}
\label{sec:approx-k-RR}

To address the computational challenge, we propose an efficient algorithm to approximate $k$-RR CFE. The proposed method first efficiently approximates the classifier $f_{\htheta_w}$, for any $ w\in \mathcal{W}^{(k)}$, without the need of retraining from scratch. Then we show that, we can reduce the constraint set with $n\choose k$ classifiers to an equivalent constraint that only requires a linear computation complexity with respect to $n$. 

\paragraph{Approximating the Classifier.}
A key observation that makes it possible to efficiently approximate $f_{\htheta_w}$ is that these classifiers can be viewed as \emph{leave-$k$-out (LKO)} estimators, which can be efficiently approximated by leveraging recent advances in LKO analysis~\citep{Giordano2019-mj,Broderick2020-lg}. 

For each of the classifier $f_{\htheta_w}(x)$ in the constraint set of the problem~\ref{eq:krr-cfe}, note that $f_{\htheta_w}(x)$ is also a function of $w$\footnote{Strictly speaking, we need to assume uniqueness of $\htheta_w$ for any given $w$. Although in practice we can often make this assumption approximately hold locally.}. For any fixed $x$, we can take a first-order Taylor approximation of $f_{\htheta_w}(x)$ with respect to $w$ at $w=\vone$, and denote this first-order approximation as $\tilde{f}_{\htheta_w}(x)$, i.e.,
\begin{align}
    \tilde{f}_{\htheta_w}(x) &= f_{\htheta_{\vone}}(x) + \frac{\partial f_{\htheta_w}(x)}{\partial w} \eval{w=\vone} (w - \vone) \label{eq:taylor} \\
    &= f_{\htheta_{\vone}}(x) + \frac{\partial f_{\theta}(x)}{\partial \theta} \eval{\theta=\htheta_{\vone}} \frac{\partial \htheta_{w}}{\partial w} \eval{w=\vone} (w - \vone), \label{eq:taylor-chain}
\end{align}
where from Eq.~(\ref{eq:taylor}) to Eq.~(\ref{eq:taylor-chain}), we have applied the chain rule. Using results in \citet{Giordano2019-mj}, we can show that\footnote{See Proposition 3 in Appendix A.3 of \citet{Giordano2019-mj}.} 
\begin{align}
    \frac{\partial \htheta_{w}}{\partial w} \eval{w=\vone} (w - \vone) = \frac{1}{n}\sum_{i:w_i=0} H^{-1} g_i(\htheta_{\vone}). \label{eq:Hg}  
\end{align} 
Therefore, the first-order Taylor approximation can be written as
\begin{align}
    \tilde{f}_{\htheta_w}(x) = f_{\htheta_{\vone}}(x) + \frac{1}{n} \sum_{i:w_i=0} \beta(x)^T  H^{-1} g_i(\htheta_{\vone}), \label{eq:taylor-explicit-form}
\end{align}
where $\beta(x) := \left(\frac{\partial f_{\theta}(x)}{\partial \theta} \eval{\theta=\htheta_{\vone}}\right)^T$ is the gradient of $f_{\theta}(x)$ with respect to the model parameters $\theta$ at $\theta = \htheta_{\vone}$. 

Replacing $f_{\htheta_w}$ with $\tilde{f}_{\htheta_w}$, we approximate the problem~(\ref{eq:krr-cfe}) with a new problem below.
\begin{align}
    \min_{x\in \mathcal{X}}\quad& \|x - x_0\|_2 \label{eq:krr-cfe-approximate} \\
    \textrm{subject to}\quad& \tilde{f}_{\htheta_w}(x) \ge \delta, \forall w\in \mathcal{W}^{(k)},\nonumber
\end{align}
where $\delta > 0$ is a constant accounting for the approximation error of $\tilde{f}_{\htheta_w}$ for $f_{\htheta_w}$, which should be chosen in a way such that for any $w\in \mathcal{W}^{(k)}$ and $x\in \mathcal{X}$, $\tilde{f}_{\htheta_w}(x) \ge \delta$ implies $f_{\htheta_w}(x) \ge 0$. The proper choice of $\delta$ is model dependent and, in practice, can be treated as a hyperparameter selected using a validation set. We also provide some theoretical insights on $\delta$ in Section~\ref{sec:theory}.

\paragraph{Reducing the Constraint Set.} With the first-order Taylor approximate classifier $\tilde{f}_{\htheta_w}$, we can further reduce the constraint set with $n \choose k$ inequalities to a single inequality.

Note that in Eq.~(\ref{eq:taylor-explicit-form}), $f_{\htheta_{\vone}}(x), \beta(x), H$, and $g_i(\htheta_{\vone})$ are all calculated based on the model $f_{\htheta_{\vone}}$ trained on the original full dataset $D$, and are independent of the data weight vector $w$. Define a set $\mathcal{A}(x) := \{\beta(x)^T H^{-1} g_i(\htheta_{\vone})\}$. Then satisfying the constraints in the problem~(\ref{eq:krr-cfe-approximate}) is equivalent to having the following condition.
\begin{align}
    f_{\htheta_{\vone}}(x) + \frac{1}{n}\min_{\mathcal{B} \subseteq \mathcal{A}(x), |\mathcal{B}| = k} \sum_{b\in \mathcal{B}} b \ge \delta, \label{eq:sum-min-nonlinear}
\end{align}
where one shall recall that $w\in \mathcal{W}^{(k)}$ always has $k$ entries as $0$ and the remaining as $1$. 

Defining
\[
f_{\mathcal{A}}^{(k)}(x) := f_{\htheta_{\vone}}(x) + \frac{1}{n}\min_{\mathcal{B} \subseteq \mathcal{A}(x), |\mathcal{B}| = k} \sum_{b\in \mathcal{B}} b,
\]
we have shown that solving the problem~(\ref{eq:krr-cfe-approximate}) is equivalent to solving the problem below.
\begin{align}
    \min_{x\in \mathcal{X}}\quad& \|x - x_0\|_2 \label{eq:krr-cfe-approximate-single-constraint} \\
    \textrm{subject to}\quad& f_{\mathcal{A}}^{(k)}(x) \ge \delta.\nonumber
\end{align}

\paragraph{Optimization.}
To solve the constrained optimization problem~(\ref{eq:krr-cfe-approximate-single-constraint}), we use the the penalty method~\citep{Freund2004-in}. Define the penalty function as $\phi(z) := \max(z, 0)^2$. We solve a series of unconstrained relaxation of the original problem~(\ref{eq:krr-cfe-approximate-single-constraint}):
\begin{align}
    \min_{x\in \mathcal{X}}\quad& J_t(x) = \lambda_t  \phi(\delta - f_{\mathcal{A}}^{(k)}(x)) + \|x - x_0\|_2, \label{eq:krr-cfe-approximate-linear-relax}
\end{align}
for $t=1,2,\ldots T$ as the iteration index. And $\lambda_t\ge 0$ is the penalty coefficient controlling the relative strength between the penalty and the original objective for each iteration $t$. Denote the solution of the $t$-th iteration as $x^*_t$. We start with a small $\lambda_1$ and double it until the first $t_0$ where $f_{\mathcal{A}}^{(k)}(x^*_{t_0}) \ge \delta$ while $f_{\mathcal{A}}^{(k)}(x^*_{t_0-1}) < \delta$. Then we have a binary search on the penalty coefficient between $\lambda_{t_0 - 1}$ and $\lambda_{t_0}$ to obtain a feasible solution with as small cost as possible. Please see Algorithm~\ref{alg:opt} in Appendix~\ref{app:algorithm} for more details.

\subsection{Practical Considerations}
\label{sec:practical}

\paragraph{Computation Costs.}
Finally, we make a few remarks on the computation costs of the proposed method. 
Given the dataset $D$ and the original model $f_{\htheta_{\vone}}$ trained on $D$, we need to first calculate the gradients $g_i(\htheta_{\vone}), i=1,\ldots, n$ and the Hessian inverse $H^{-1}$. Calculating the exact Hessian inverse may be expensive for models with high-dimensional parameters, such as neural networks. However, we can leverage computational tricks calculating influence functions~\citep{Koh2017-vo} to efficiently approximate $H^{-1}$. In addition, $H^{-1}$ and $g_i(\htheta_{\vone})$'s only need to be calculated once for the whole process and are shared for all the test samples.

The major computation cost comes from repeated evaluations of $f_{\mathcal{A}}^{(k)}(x)$ at different $x$ during the optimization procedure. For each $x$, we can use automatic differentiation tools such as PyTorch~\citep{Paszke2017-ce} to evaluate $f_{\htheta_{\vone}}(x)$ and $\beta(x)$ by one forward pass and one backward pass. Suppose we have pre-computed and stored the values of $H^{-1}g_i(\htheta_{\vone}), i=1,\ldots, n$, we can obtain the set $\mathcal{A}(x)$ by $n$ vector multiplications. Finally, evaluating $f_{\mathcal{A}}^{(k)}(x)$ requires partially sorting $\mathcal{A}(x)$ and obtaining the bottom-$k$ values, which has a complexity of $O(n\log k)$. Overall, evaluating the constraint of problem~(\ref{eq:krr-cfe-approximate-single-constraint}) has a complexity that is linear in $n$, which is much smaller than evaluating the original constraint set with $n \choose k$ models. 

\paragraph{Hyperparameters.} The proposed method has two hyperparameters, $k$ and $\delta$. The hyperparameter $k$ should be set from a rough estimate of the number of data removals, which relies on domain knowledge of the application. Empirically, however, we find the method is not very sensitive to the value of $k$ so there is a good tolerance on the choice of $k$. The hyperparameter $\delta$ measures how good is the Taylor approximation of the function. In practice, we can choose $\delta$ on a validation set and simulating a few models trained after random removals. But in our experiments, we find fixing it as 0 also works well empirically.

\paragraph{A Special Case: Linear Models.}
When $f_{\theta}(x) = \theta^T x$ is a linear model, the first-order Taylor approximation in Eq.~(\ref{eq:taylor-explicit-form}) simplifies to the following form:
\begin{align}
    \tilde{f}_{\htheta_{w}}(x) = \htheta_{\vone}^T x + \frac{1}{n}\sum_{i:w_i = 0}x^T H^{-1}g_i(\htheta_{\vone}), \label{eq:taylor-explicit-form-linear}
\end{align}
since $\beta(x) = x$ for linear models. In this special case, we can avoid going through the backward pass when evaluating $\beta(x)$, which makes the optimization much more efficient.

\paragraph{Local Linear Approximation of Nonlinear Models.} Owing to computational efficiency considerations, it is a common practice in recourse literature to first obtain a local linear approximation of the underlying model at each test sample, and then leverage this to compute counterfactual explanations~\citep{Upadhyay2021-ew,Ustun2019ActionableRI,rawal2020interpretable}.
Along similar lines, we propose to apply ROCERF on local linear approximations of nonlinear models to further improve the computational efficiency in practice. Specifically, we use LIME~\citep{ribeiro2016should} to obtain local linear approximations of the underlying models. 

\section{Theoretical Analysis of Validity and Cost}
\label{sec:theory}
In this section, we provide theoretical guarantees on validity and cost of CFEs obtained by the proposed method, under a small fraction of data removal in the training set. In particular, we characterize the trade-off between validity and cost and provide upper bounds on the cost needed to guarantee that the CFE is robustly valid. We first present an analysis for linear models and then for nonlinear models with regularity assumptions.

\subsection{Analysis on Linear Models}
\label{sec:theory-linear}
Assume the machine learning models are regularized logistic regression, i.e., $l_i(\theta) = \log(1+\exp(-y_i\theta^T x_i)) + \gamma \|\theta\|_2^2$, and the model parameters have bounded norm. In this case, the following Theorem~\ref{thm:validity-cost-linear} provides theoretical guarantees on the validity and cost, and the detailed proof of which can be found in Appendix~\ref{app:proof-validity-cost-linear}.
\begin{theorem}[Validity and Cost on Logistic Regression]
\label{thm:validity-cost-linear}
For any data point $x_0\in \mathcal{X}$, let $\tilde{x}_0$ be the CFE of $x_0$, and let $\tilde{x}_0^{(k)}$ be the solution of the optimization problem~(\ref{eq:krr-cfe-approximate-single-constraint}) when the classifiers are regularized logistic regression. Then we can properly choose $\delta$ such that, if $\tilde{x}_0^{(k)}$ exists,  $\tilde{x}_0^{(k)}$ remains a valid CFE for all possible removal of $k$ data points, i.e., 
\[f_{\htheta_w}(\tilde{x}_0^{(k)}) \ge 0, \forall w\in \mathcal{W}^{(k)}.\]

\vspace{-0.1in}
Furthermore, the cost of implementing $\tilde{x}_0^{(k)}$ is upper bounded as following,
\begin{align}
    \|\tilde{x}_0^{(k)} - x_0\|_2 \le \|\tilde{x}_0 - x_0\|_2 + \frac{kC}{n\|\htheta_{\vone}\|_2}, \label{eq:cost-bound-linear}
\end{align}
where $C$ is a constant independent of $n$.
\end{theorem}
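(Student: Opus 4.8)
The plan is to prove the two claims separately: the validity claim rests on a leave-$k$-out (LKO) error bound that pins down the correct scale of $\delta$, while the cost claim rests on a crude worst-case lower bound for $f_{\mathcal{A}}^{(k)}$ combined with the projection geometry of a linear constraint.

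\textbf{Validity.} First I would record the two boundedness facts that regularization buys us. Since $l_i(\theta)=\log(1+\exp(-y_i\theta^Tx_i))+\gamma\|\theta\|_2^2$, its Hessian is $h_i(\theta)=\sigma(\cdot)(1-\sigma(\cdot))\,x_ix_i^T+2\gamma I\succeq 2\gamma I$, so $H\succeq 2\gamma I$ and $\|H^{-1}\|_2\le 1/(2\gamma)$; and at $\htheta_{\vone}$ the gradient $g_i(\htheta_{\vone})=-y_ix_i\,\sigma(-y_i\htheta_{\vone}^Tx_i)+2\gamma\htheta_{\vone}$ has norm bounded by $G:=B+2\gamma\|\htheta_{\vone}\|_2=O(1)$, using $B=O(1)$ and the bounded-norm assumption. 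Writing $\tilde\htheta_w:=\htheta_{\vone}+\tfrac1n\sum_{i:w_i=0}H^{-1}g_i(\htheta_{\vone})$ for the parameter underlying $\tilde f_{\htheta_w}$ in Eq.~(\ref{eq:taylor-explicit-form-linear}), I would invoke the LKO analysis of \citet{Giordano2019-mj} to bound $\|\htheta_w-\tilde\htheta_w\|_2$ uniformly over $w\in\mathcal{W}^{(k)}$; under the above regularity this error is second order in the removed fraction, i.e.\ $O((k/n)^2)$. For a linear model this transfers directly to the classifier values, $|f_{\htheta_w}(x)-\tilde f_{\htheta_w}(x)|\le\|x\|_2\|\htheta_w-\tilde\htheta_w\|_2\le B\|\htheta_w-\tilde\htheta_w\|_2$ for every $x\in\mathcal{X}$. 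I would then \emph{define} $\delta$ to be an upper bound on $\sup_{w\in\mathcal{W}^{(k)},\,x\in\mathcal{X}}|f_{\htheta_w}(x)-\tilde f_{\htheta_w}(x)|$. Because the constraint $f_{\mathcal{A}}^{(k)}(x)\ge\delta$ is by construction equivalent to $\tilde f_{\htheta_w}(x)\ge\delta$ for \emph{all} $w\in\mathcal{W}^{(k)}$ (the inner minimum selects the worst removal), feasibility of $\tilde x_0^{(k)}$ gives $\tilde f_{\htheta_w}(\tilde x_0^{(k)})\ge\delta$, hence $f_{\htheta_w}(\tilde x_0^{(k)})\ge\tilde f_{\htheta_w}(\tilde x_0^{(k)})-\delta\ge 0$ for every $w$, which is exactly the validity claim.

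\textbf{Cost.} For the cost bound I would construct an explicit feasible point for problem~(\ref{eq:krr-cfe-approximate-single-constraint}) and invoke optimality of $\tilde x_0^{(k)}$. Each element of $\mathcal{A}(x)$ satisfies $|x^TH^{-1}g_i(\htheta_{\vone})|\le B\cdot(1/2\gamma)\cdot G=:C_0$ for $x\in\mathcal{X}$, so the bottom-$k$ sum is at least $-kC_0$ and therefore
\[
    f_{\mathcal{A}}^{(k)}(x)\ \ge\ \htheta_{\vone}^Tx-\frac{kC_0}{n}.
\]
Consequently the half-space $\{x:\htheta_{\vone}^Tx\ge\eta\}$ with $\eta:=\delta+kC_0/n$ lies inside the feasible region. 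The plain CFE $\tilde x_0$ is the Euclidean projection of $x_0$ onto $\{\htheta_{\vone}^Tx\ge 0\}$, while the projection of $x_0$ onto $\{\htheta_{\vone}^Tx\ge\eta\}$ is robustly feasible; both projections displace $x_0$ along $\htheta_{\vone}/\|\htheta_{\vone}\|_2$, and a short case analysis on the sign of $\htheta_{\vone}^Tx_0$ relative to $\eta$ shows the two projection costs differ by at most $\eta/\|\htheta_{\vone}\|_2$. Since $\tilde x_0^{(k)}$ is the minimum-cost robust solution, $\|\tilde x_0^{(k)}-x_0\|_2\le\|\tilde x_0-x_0\|_2+\eta/\|\htheta_{\vone}\|_2$. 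Finally, because $\delta=O((k/n)^2)\le kC'/n$ for a constant $C'$, we get $\eta\le k(C'+C_0)/n$, yielding Eq.~(\ref{eq:cost-bound-linear}) with $C=C'+C_0$, which is independent of $n$.

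\textbf{Main obstacle.} The geometric cost argument is routine once the worst-case lower bound on $f_{\mathcal{A}}^{(k)}$ is in place. The genuine work is the LKO step: verifying that the hypotheses of \citet{Giordano2019-mj} hold for regularized logistic regression (uniform strong convexity via $\gamma>0$, bounded per-example gradients and Hessians, and the local uniqueness of $\htheta_w$ flagged in the footnote of the paper), and extracting an error bound that is simultaneously uniform over all $\binom{n}{k}$ weight vectors and of order $k/n$ or better. Controlling this error is precisely what both licenses a valid choice of $\delta$ in the first part and keeps $\delta$ small enough that it does not dominate the $kC_0/n$ term in the cost bound.
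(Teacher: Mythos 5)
Your proof is correct and its validity half is essentially the paper's argument: bound $\sup_{w}\|\htheta_w-\ttheta_w\|_2$ via \citet{Giordano2019-mj}, transfer to function values through $|f_{\htheta_w}(x)-\tilde f_{\htheta_w}(x)|\le B\|\htheta_w-\ttheta_w\|_2$, and set $\delta$ equal to that bound (the paper uses the first-order rate $kC_1/n$ from its Lemma~\ref{lemma:lko-approx-error} rather than your $O((k/n)^2)$ claim, but either suffices and yours, if anything, permits a smaller $\delta$). Where you genuinely diverge is the cost step. The paper controls $\|\ttheta_w-\htheta_{\vone}\|_2$ indirectly, via Lemma~\ref{lemma:theta-diff}: it splits through the exact retrained minimizer $\htheta_w$ and invokes a Lipschitz-plus-strong-convexity sensitivity argument (following \citet{Neel2021-iu}) to get $\|\htheta_w-\htheta_{\vone}\|_2\le 2Lk/(\mu n)$. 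You instead bound the correction term directly, $\bigl|\tfrac1n\sum_{i:w_i=0}x^TH^{-1}g_i(\htheta_{\vone})\bigr|\le \tfrac{k}{n}B\|H^{-1}\|_{\mathrm{op}}\sup_i\|g_i\|_2$, using only $H\succeq 2\gamma I$ and bounded gradients. Your route is more elementary and avoids any appeal to the exact retrained parameters in the cost analysis; the paper's route is the one that generalizes to the nonlinear setting of Theorem~\ref{thm:validity-cost-nonlinear}, where no closed-form correction term is available. The final geometric step is equivalent in both: you phrase it as comparing projections of $x_0$ onto the half-spaces $\{\htheta_{\vone}^Tx\ge 0\}$ and $\{\htheta_{\vone}^Tx\ge\eta\}$, while the paper exhibits the shifted point $x'=\tilde x_0+\tfrac{kC}{n\|\htheta_{\vone}\|_2^2}\htheta_{\vone}$ and applies the triangle inequality; both implicitly assume the shifted point lies in $\mathcal{X}$, so neither is more rigorous than the other on that point.
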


\begin{proof}[Proof Sketch]
The proof of Theorem~\ref{thm:validity-cost-linear} involves two key steps. The first step is to derive a bound on the difference between the actual retrained model $f_{\htheta_w}$ and its Taylor approximation model $\tilde{f}_{\htheta_w}$. This bound gives us an estimate on how large $\delta$ is needed in the optimization problem~(\ref{eq:krr-cfe-approximate-single-constraint}) in order to ensure validity. The second step is to derive a bound on the difference between the Taylor approximation model $\tilde{f}_{\htheta_w}$ and the original model $f_{\htheta_{\vone}}$. For regularized logistic regression, both differences can be well bounded without further assumptions. Note that the difference between $\tilde{x}_0^{(k)}$ and $\tilde{x}_0$ is that the former is constrained by $\tilde{f}_{\htheta_w}(\tilde{x}_0^{(k)}) \ge \delta$ while the latter is constrained by $f_{\htheta_{\vone}}(\tilde{x}_0) \ge 0$. So together with the estimate on $\delta$, the bound on the difference between $\tilde{f}_{\htheta_w}$ and $f_{\htheta_{\vone}}$ allows us to bound the additional cost of $\tilde{x}_0^{(k)}$ in comparison to $\tilde{x}_0$. 
\end{proof}

This result states that the additional cost needed to achieve robust validity has an upper bound of $O(\frac{k}{n})$, and this additional cost vanishes when the training set size $n$ is very large and the number of removals $k$ is relatively small. As a sanity check, in the degenerate case where there is no data removed, i.e., $k=0$, there is also no additional cost.

This result also indicates that for simple models trained on abundant data, it is possible to provide robustly valid recourses to users with little additional costs, thus paving the way for \emph{bridging critical operational gaps between the right to explanation and the right to be forgotten}. The technical insight behind this strong guarantee is that, when the number of data removals is not too large compared to the training set, the retrained model will not change too much (difference between $f_{\htheta_w}$ and $f_{\htheta_{\vone}}$), and the change can be efficiently estimated (through $\tilde{f}_{\htheta_w}$). 

\begin{remark}
Technically, neither the problem~(\ref{eq:krr-cfe}) nor its approximation~(\ref{eq:krr-cfe-approximate-single-constraint}) is guaranteed to be feasible. However, especially for linear models, we find that they are always feasible on the datasets we empirically tested. This is possibly because the difference among $f_{\htheta_w}$ for all $w\in \mathcal{W}^{(k)}$ is not dramatically large.
\end{remark}

\subsection{Analysis on Nonlinear Models}
\label{sec:theory-nonlinear}

Next, we generalize Theorem~\ref{thm:validity-cost-linear} to nonlinear models with the following assumptions.

\begin{assumption}
\label{assump:regularity}
Assume that there exist universal finite constants $C_1, C_2, C_3, C_4, C_5$ independent of $n$ such that 
\begin{enumerate}
    \item $\sup_{\theta\in \Theta}\frac{1}{n}\sum_{i=1}^n \|h_i(\theta)\|_{F} \le C_1$;
    \item $\sup_{\theta\in \Theta} \|g_i(\theta)\|_{2} \le C_2, i=1,\ldots, n$;
    \item $\sup_{x\in \mathcal{X}}\|\beta(x)\|_2 \le C_3$;
    \item $H(\theta):=\frac{1}{n}\sum_{i=1}^n h_i(\theta)$ is nonsingular and 
    \[\sup_{\theta\in \Theta}\|H(\theta)^{-1}\|_{\textrm{op}} \le C_4;\]
    \item there exists suitable $\Delta > 0$, such that 
    \[\sup_{\|\theta - \htheta_{\vone}\|_2<\Delta} \frac{1}{n} \sum_{i=1}^n \|h_i(\theta) - h_i(\htheta_{\vone})\|_F \le C_5 \|\theta - \htheta_{\vone}\|_2.\]
\end{enumerate}

\end{assumption}

\begin{theorem}[Validity and Cost on Nonlinear Models]
\label{thm:validity-cost-nonlinear}
For any data point $x_0\in \mathcal{X}$, let $\tilde{x}_0$ be the CFE of $x_0$, and let $\tilde{x}_0^{(k)}$ be the solution of the optimization problem~(\ref{eq:krr-cfe-approximate-single-constraint}). Assume the classifiers satisfy Assumption~\ref{assump:regularity}. Then we can properly choose $\delta$ such that, if $\tilde{x}_0^{(k)}$ exists,  $\tilde{x}_0^{(k)}$ remains a valid CFE for all possible removal of $k$ data points, i.e., 
\[f_{\htheta_w}(\tilde{x}_0^{(k)}) \ge 0, \forall w\in \mathcal{W}^{(k)}.\]

Furthermore, the cost of implementing $\tilde{x}_0^{(k)}$ is upper bounded as following,
\begin{align}
    & \|\tilde{x}_0^{(k)} - x_0\|_2 \nonumber \\
    \le &\|\tilde{x}_0 - x_0\|_2 + \min_{\substack{x\in \mathcal{X},\\ f_{\htheta_{\vone}}(x) - f_{\htheta_{\vone}}(\tilde{x}_0) \ge \frac{kC}{n}}} \|x - \tilde{x}_0\|_2, \label{eq:cost-bound-nonlinear}
\end{align}
where $C$ is a constant independent of $n$.

Specially, if $f_{\htheta_{\vone}}$ is $\mu$-strongly convex, then 
\begin{align}
    \|\tilde{x}_0^{(k)} - x_0\|_2 \le \|\tilde{x}_0 - x_0\|_2 + \frac{2kC}{n \mu}. \label{eq:cost-bound-nonlinear-convex}
\end{align}
\end{theorem}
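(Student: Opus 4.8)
The plan is to mirror the two-step strategy used for the linear case (the sketch following Theorem~\ref{thm:validity-cost-linear}), replacing the closed-form logistic computations by the regularity bounds of Assumption~\ref{assump:regularity} and the leave-$k$-out (LKO) error analysis of \citet{Giordano2019-mj}. Throughout I would write $\tilde\htheta_w := \htheta_{\vone} + \frac{1}{n}\sum_{i:w_i=0} H^{-1}g_i(\htheta_{\vone})$ for the first-order parameter surrogate, so that $\tilde{f}_{\htheta_w}(x) = f_{\htheta_{\vone}}(x) + \beta(x)^T(\tilde\htheta_w - \htheta_{\vone})$ as in Eq.~(\ref{eq:taylor-explicit-form}).

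\emph{Step 1 (choosing $\delta$ and proving validity).} First I would bound the uniform approximation error $E := \sup_{w\in\mathcal{W}^{(k)},\,x\in\mathcal{X}} |f_{\htheta_w}(x) - \tilde{f}_{\htheta_w}(x)|$. Splitting this into the LKO parameter error $\beta(x)^T(\htheta_w - \tilde\htheta_w)$ and the remainder of $f$ along $\htheta_w - \htheta_{\vone}$, the first piece is controlled by $\|\beta(x)\|_2 \le C_3$ together with the bound on $\|\htheta_w - \tilde\htheta_w\|_2$, which under conditions 1, 2, 4, 5 of Assumption~\ref{assump:regularity} is quadratic in the perturbation size and hence $O((k/n)^2)$. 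Setting $\delta := E$, validity is then immediate: since $\tilde{x}_0^{(k)}$ satisfies $f_{\mathcal{A}}^{(k)}(\tilde{x}_0^{(k)}) \ge \delta$ and, by definition of $f_{\mathcal{A}}^{(k)}$ as the worst-case minimum over $k$-subsets of $\mathcal{A}(x)$, this is exactly $\tilde{f}_{\htheta_w}(\tilde{x}_0^{(k)}) \ge \delta$ for every $w\in\mathcal{W}^{(k)}$, we obtain $f_{\htheta_w}(\tilde{x}_0^{(k)}) \ge \tilde{f}_{\htheta_w}(\tilde{x}_0^{(k)}) - E \ge 0$.

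\emph{Step 2 (first-order deviation) and Step 3 (general cost bound).} Next I would bound the first-order term itself: for every $b = \beta(x)^T H^{-1} g_i(\htheta_{\vone})\in\mathcal{A}(x)$, Cauchy--Schwarz with conditions 2, 3, 4 gives $|b| \le C_2 C_3 C_4$, so $\frac{1}{n}|\min_{\mathcal{B}\subseteq\mathcal{A}(x),|\mathcal{B}|=k}\sum_{b\in\mathcal{B}}b| \le k C_2 C_3 C_4/n$. Consequently any $x$ with $f_{\htheta_{\vone}}(x) \ge \delta + k C_2 C_3 C_4/n$ is feasible for problem~(\ref{eq:krr-cfe-approximate-single-constraint}), because $f_{\mathcal{A}}^{(k)}(x) \ge f_{\htheta_{\vone}}(x) - k C_2 C_3 C_4/n \ge \delta$. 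Since $\delta = O((k/n)^2)$ from Step 1, the threshold $\delta + k C_2 C_3 C_4/n$ is at most $kC/n$ for a constant $C$ independent of $n$. Recalling $f_{\htheta_{\vone}}(\tilde{x}_0) = 0$ and that $\tilde{x}_0^{(k)}$ is optimal for~(\ref{eq:krr-cfe-approximate-single-constraint}), for any such feasible competitor $x$ the triangle inequality gives $\|\tilde{x}_0^{(k)} - x_0\|_2 \le \|x - x_0\|_2 \le \|\tilde{x}_0 - x_0\|_2 + \|x - \tilde{x}_0\|_2$; minimizing over all $x$ with $f_{\htheta_{\vone}}(x) - f_{\htheta_{\vone}}(\tilde{x}_0) \ge kC/n$ yields exactly~(\ref{eq:cost-bound-nonlinear}).

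\emph{Step 4 (strongly convex specialization) and the main obstacle.} Finally, to obtain~(\ref{eq:cost-bound-nonlinear-convex}) I would solve the inner minimization in~(\ref{eq:cost-bound-nonlinear}) explicitly: moving from $\tilde{x}_0$ along the ray $\tilde{x}_0 + t\,\nabla_x f_{\htheta_{\vone}}(\tilde{x}_0)/\|\nabla_x f_{\htheta_{\vone}}(\tilde{x}_0)\|_2$ and using $\mu$-strong convexity of $f_{\htheta_{\vone}}$ to lower bound the increase of the objective along this direction, the required value gap $kC/n$ is converted into a distance of at most $2kC/(n\mu)$. I expect the main obstacle to be Step~1: unlike the logistic case, where both model differences admit direct closed-form control, here the second-order LKO remainder must be bounded uniformly over all $\binom{n}{k}$ weight vectors and all $x\in\mathcal{X}$, which is precisely where the full strength of Assumption~\ref{assump:regularity} --- the operator-norm control of $H(\theta)^{-1}$ (condition 4) and the Hessian-Lipschitz condition (condition 5) --- is needed to invoke the guarantee of \citet{Giordano2019-mj} that $\|\htheta_w - \tilde\htheta_w\|_2$ is second order in $k/n$. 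Checking that the perturbation $w-\vone$ keeps $\htheta_w$ within the radius $\Delta$ of condition 5 for all removals of size $k$, so that this error bound applies uniformly, is the delicate technical point.
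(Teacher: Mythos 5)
Your proposal is correct and follows essentially the same route as the paper's proof: a uniform bound on $\sup_{x,w}|\tilde{f}_{\htheta_w}(x)-f_{\htheta_w}(x)|$ fixes $\delta$ and yields validity, the Cauchy--Schwarz bound $|\beta(x)^T H^{-1}g_i(\htheta_{\vone})|\le C_2C_3C_4$ combined with the triangle inequality yields the general cost bound, and the gradient-ray argument under $\mu$-strong convexity yields the explicit $\frac{2kC}{n\mu}$ bound. The only deviation is your claim that the leave-$k$-out remainder is $O((k/n)^2)$; the paper instead invokes a uniform $O(k/n)$ bound on the function-value error (Theorem 1 of \citet{Broderick2020-lg}), but since your argument only requires $\delta$ to be at most $O(k/n)$, this does not affect correctness.
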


The proof of Theorem~\ref{thm:validity-cost-nonlinear} follows similar steps as Theorem~\ref{thm:validity-cost-linear}. Assumption~\ref{assump:regularity} is specifically baked to bound the difference $|\tilde{f}_{\htheta_w}(x) - f_{\htheta_w}(x)|$ and the difference $|\tilde{f}_{\htheta_w}(x) - f_{\htheta_{\vone}}(x)|$. The detailed proof can be found in Appendix~\ref{app:proof-validity-cost-nonlinear}.

Theorem~\ref{thm:validity-cost-nonlinear} shares similar insights as the linear case while generalizing the results to a broader family of models beyond linear models. Admittedly, the assumptions are relatively strong for them to be held on very complex models such as neural networks. However, in applications where explainability is of major interest, simpler models are often preferred~\citep{Srinivas2022-lr}. So this result still provides valuable insights in practice.
\section{Experimental Evaluation}

\begin{figure*}[t!]
        \centering
        \begin{subfigure}[b]{0.27\textwidth}
            \centering
            \includegraphics[width=\textwidth]{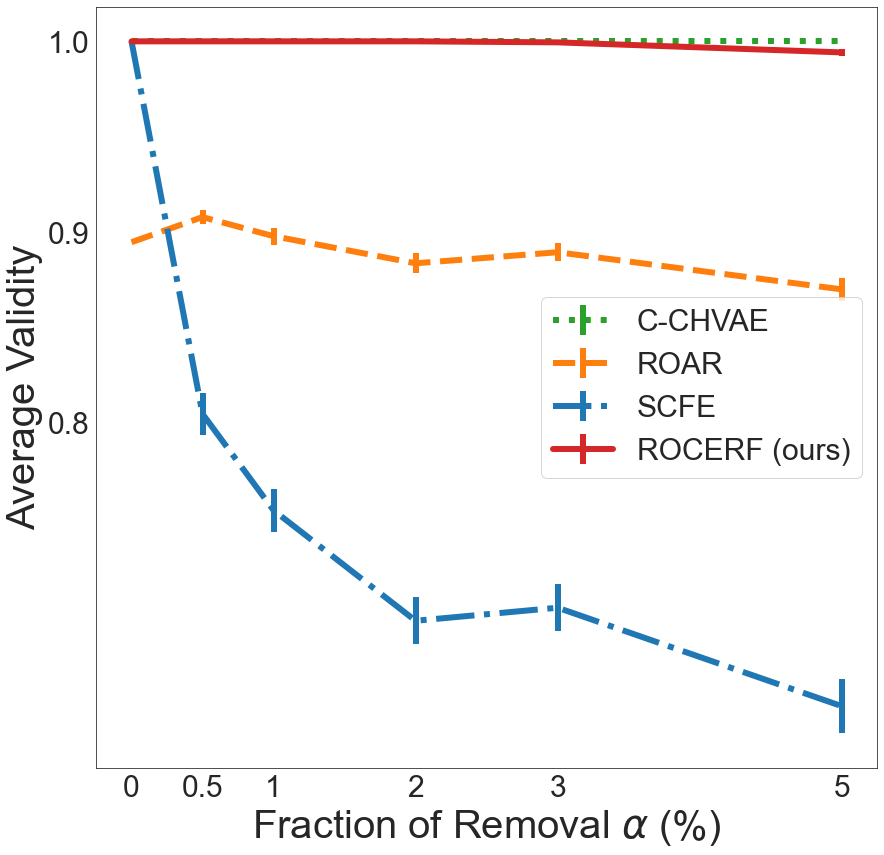}
            \caption{German Credit.}
            \label{fig:german-linear}
        \end{subfigure}
        \begin{subfigure}[b]{0.27\textwidth}
            \centering
            \includegraphics[width=\textwidth]{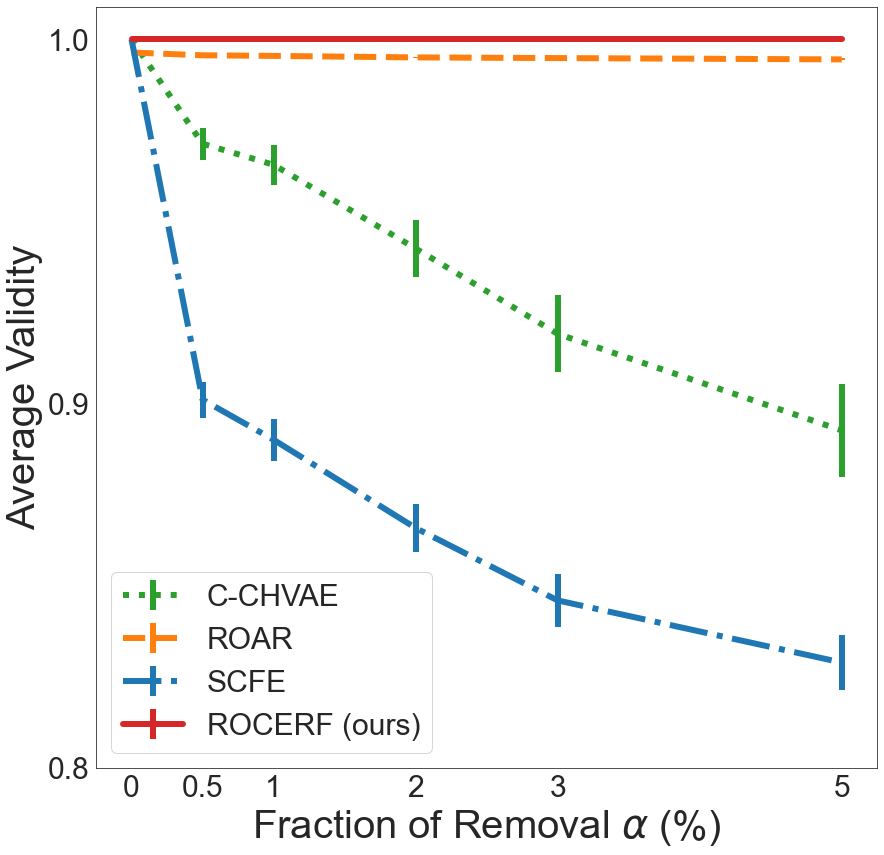}
            \caption{COMPAS.}
            \label{fig:compas-linear}
        \end{subfigure}
        \begin{subfigure}[b]{0.27\textwidth}
            \centering
            \includegraphics[width=\textwidth]{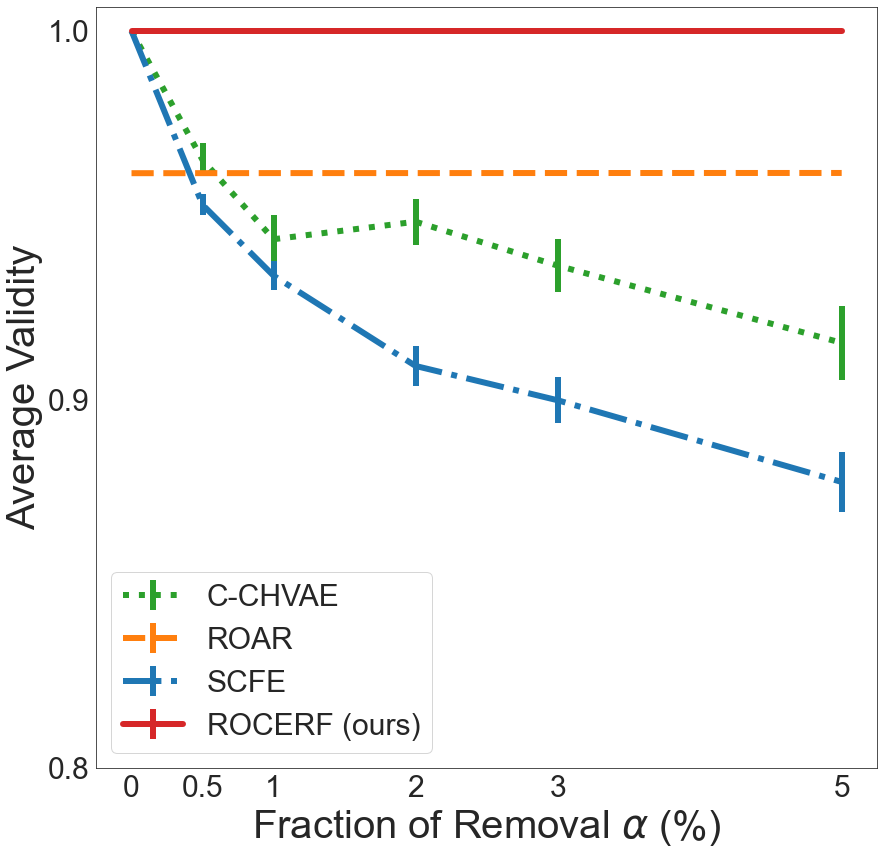}
            \caption{Adult.}
            \label{fig:adult-linear}
        \end{subfigure}
        \caption{Average validity of different counterfactual explanation methods applied to logistic regression models on three datasets. In each figure, the x-axis corresponds to the fraction of data removal $\alpha$ and the y-axis corresponds to the average validity. The error bars indicate the standard errors across $M=100$ trials with each trial having an $\alpha$ fraction of training data points randomly removed.}  %
        \label{fig:validity-linear}
\end{figure*}

\begin{figure*}[t!]
        \centering
        \begin{subfigure}[b]{0.27\textwidth}
            \centering
            \includegraphics[width=\textwidth]{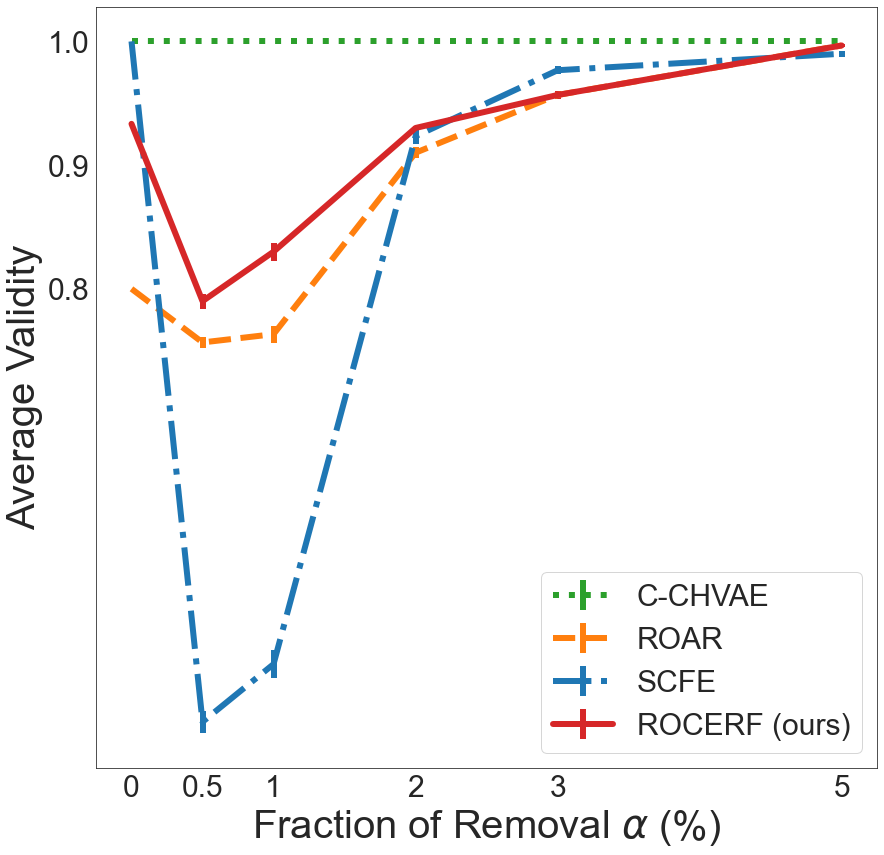}
            \caption{German Credit.}
            \label{fig:german-nonlinear}
        \end{subfigure}
        \begin{subfigure}[b]{0.27\textwidth}
            \centering
            \includegraphics[width=\textwidth]{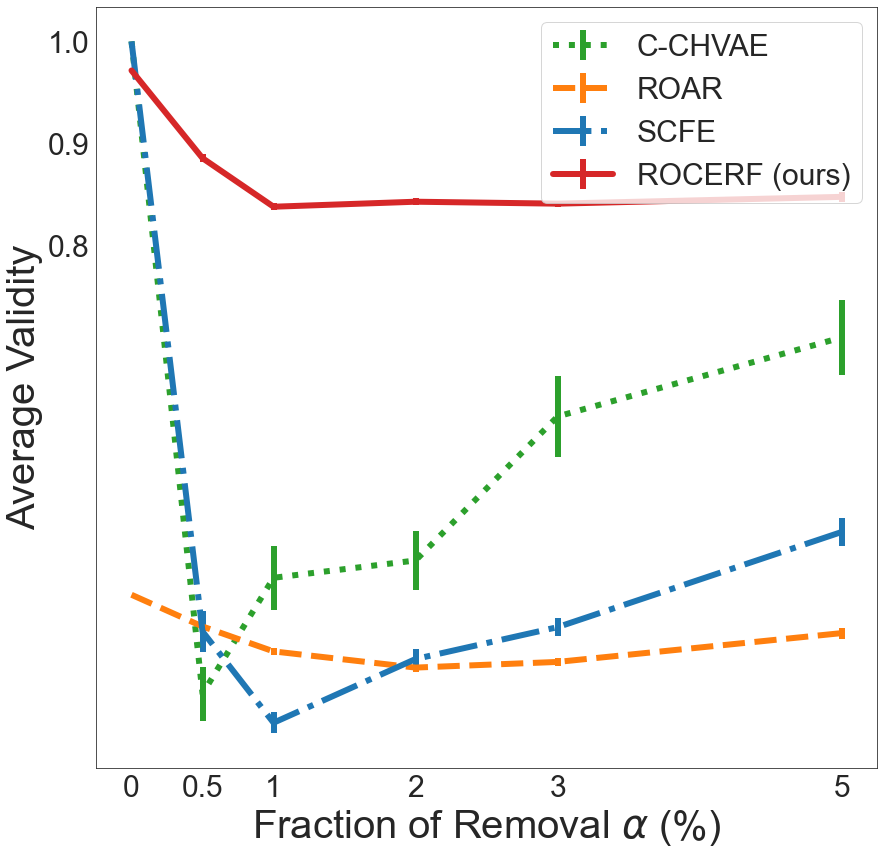}
            \caption{COMPAS.}
            \label{fig:compas-nonlinear}
        \end{subfigure}
        \begin{subfigure}[b]{0.27\textwidth}
            \centering
            \includegraphics[width=\textwidth]{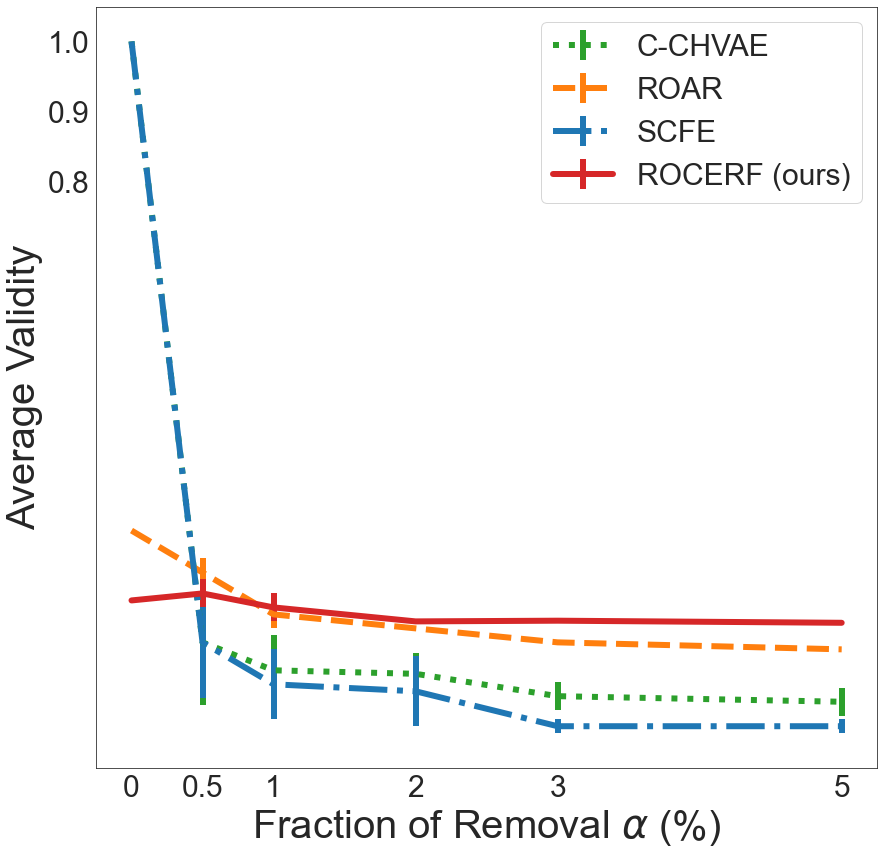}
            \caption{Adult.}
            \label{fig:adult-nonlinear}
        \end{subfigure}
        \caption{Average validity of different counterfactual explanation methods applied to neural network models on three datasets. See Figure~\ref{fig:validity-linear} for more details about the plot setting.}  %
        \label{fig:validity-nonlinear}
\end{figure*}

In this section, we empirically evaluate the validity and cost of the counterfactual explanations output by our framework, and compare them with other state-of-the-art counterfactual explanation methods. We first introduce the general experimental setup and then present experimental results on three real-world datasets with logistic regression and neural network models.

\subsection{Experimental Setup}

We conduct experiments on three real-world datasets that are commonly used to benchmark counterfactual explanation methods. For each dataset, we split the dataset into training, validation, and test sets. We train a machine learning model (which we will refer as the \emph{original model}) using the training set, and select the negative samples (data points classified as $-1$) in the test set. Then we apply different counterfactual explanation methods on these negative test samples to obtain a CFE for each of the sample. We calculate and report the average cost of the CFEs over all the negative test samples. To evaluate the validity under the right to be forgotten, we randomly remove a small fraction, $\alpha$, of the training data points and retrain a new model (which we will refer as the \emph{retrained model}), and then we evaluate the validity over all the negative test samples under the retrained model. We repeat this process of random removal, retrain, and validity evaluation for $M$ times and report the average validity. We fix $M=100$ on all experiments and vary $\alpha \in \{0.5\%, 1\%, 2\%, 3\%, 5\%\}$.

\paragraph{Datasets.} We use three real-world binary classification datasets collected from high-stakes decision making scenarios. 1) \textit{German Credit}~\citep{UCI} comprises of 1000 data points where each data point has 60 features including demographic (age, gender), personal (marital status), and financial (income, credit duration) information of a customer. These data points are labeled as ``good'' or ``bad'' in terms of credit risk. 2) \textit{Adult}~\citep{yeh2009comparisons} contains samples from 48,842 individuals, and each sample contains demographic (e.g., age, race, and gender), education (degree), employment (occupation, hours-per week), personal (marital status, relationship), and financial (capital gain/loss) features. 3) \textit{COMPAS}~\citep{jordan15:effect} comprises of criminal records and demographic features of 18,876 defendants who were released on bail at the US state courts during the period 1990-2009. The prediction target is ``bail'' or ``no bail'' given the defendant's data.

\vspace{-0.1in}
\paragraph{Predictive Models.} We experiment with regularized logistic regression and deep neural networks. For regularized logistic regression, we use the default implementation from the Scikit-Learn package\footnote{\url{https://scikit-learn.org/stable/}.}. For neural networks, we use a 3-layer fully-connected feedforward neural network. Please see Appendix~\ref{app:setup} for more details about the implementation.

\vspace{-0.1in}
\paragraph{Evaluation Metrics.} We evaluate the counterfactual explanation methods in terms of average validity and cost, which are the two most commonly used metrics in the counterfactual explanation literature~\citep{Verma2020-bp}. Denote the set of negative samples under the original model $f_{\htheta_{\vone}}$ as $\mathcal{T}$ and the set of $M$ random removals as $\mathcal{V} \subseteq \mathcal{W}^{(\lceil \alpha n \rceil)}, |\mathcal{V}| = M$. Suppose the CFE of a sample $x$ is denoted as $c(x)$. Then the \emph{average validity} is defined as
\[\frac{1}{M}\sum_{w\in \mathcal{V}} \frac{1}{|\mathcal{T}|}\sum_{x\in \mathcal{T}} \mathbbm{1}[f_{\htheta_w}(c(x)) = 1],\]
where $\mathbbm{1}[\cdot]$ is the indicator function. And the \emph{average cost} is defined as
\vspace{-0.1in}
\[\frac{1}{|\mathcal{T}|}\sum_{x\in \mathcal{T}} \|c(x) - x\|_2.\]
In Appendix~\ref{app:results}, we also report an alternative average cost with the L2 norm being replaced by L1 norm.

\paragraph{Baseline Methods.} We compare the proposed method against three state-of-the-art counterfactual explanation methods, SCFE~\citep{Wachter2017-yk}, C-CHVAE~\citep{pawelczyk2020learning}, and ROAR~\citep{Upadhyay2021-ew}. SCFE uses gradient-based optimization to search for CFEs closest to the input sample, which can be viewed as the solution of the problem~(\ref{eq:cfe}). C-CHVAE is a manifold-based method that searches for CFEs in a latent space. ROAR generates CFEs that are robust to small perturbations in model parameters, which is a strong baseline for the problem of interest in this paper.

\paragraph{Hyperparameters.} For the proposed method, \ours, we set the hyperparameter $k$ as 0.5\% of the training set size and fix $\delta=0$ in all experiments in this section. For SCFE, we use the hyper-parameter setting from \citet{pawelczyk2021connections}. For C-CHVAE, we use the recommendations from \citet{pawelczyk2020learning}. We also use the same hyper-parameter setting for ROAR as suggested in \citet{Upadhyay2021-ew}. We refer the readers to Appendix~\ref{app:setup} for more details.

\subsection{Experimental Results}

\begin{table}[]
\centering
\scalebox{0.8}{
\begin{tabular}{m{2.5cm}|m{2cm}|m{2cm}|m{2cm}m{1cm}m{1cm}m{0.9cm}}
\toprule
\multirow{1}{*}{Methods} & \multicolumn{1}{c|}{German Credit} & \multicolumn{1}{c|}{COMPAS} & \multicolumn{1}{c}{Adult} \\
\midrule
SCFE    & 0.82 $\pm$ 0.12  & 0.78 $\pm$ 0.02  & 1.04 $\pm$ 0.006  \\ 
C-CHVAE &  8.51 $\pm$ 0.38  & 5.93 $\pm$ 0.11 & 3.79 $\pm$ 0.013 \\ 
ROAR    & 1.45 $\pm$ 0.09 & 1.08 $\pm$ 0.01 & 1.07 $\pm$ 0.006 \\ 
\ours (ours) & 1.35 $\pm$ 0.14 & 0.87 $\pm$ 0.02 & 1.14 $\pm$ 0.006  \\
\bottomrule
\end{tabular}
}
\caption{Average cost of different recourse methods applied to logistic regression models on three datasets. The cost is measured in terms of L2 norm.}
\label{tab:cost-linear-l2}
\end{table}

\begin{table}[]
\centering
\scalebox{0.8}{
\begin{tabular}{m{2.5cm}|m{2cm}|m{2cm}|m{2cm}m{1cm}m{1cm}m{0.9cm}}
\toprule
\multirow{1}{*}{Methods} & \multicolumn{1}{c|}{German Credit} & \multicolumn{1}{c|}{COMPAS} & \multicolumn{1}{c}{Adult} \\
\midrule
SCFE    & 1.18 $\pm$ 0.08  & 0.97 $\pm$ 0.11  &  1.00 $\pm$ 0.09  \\ 
C-CHVAE &  4.45 $\pm$ 0.18  & 5.98 $\pm$ 0.12 &  8.83 $\pm$ 0.31 \\ 
ROAR    & 3.84 $\pm$ 0.33 & 1.09 $\pm$ 0.13  & 4.07 $\pm$ 0.55 \\ 
\ours (ours) & 2.76 $\pm$ 0.22 & 3.07 $\pm$ 0.08 & 4.06 $\pm$ 0.52  \\ 
\bottomrule
\end{tabular}
}
\caption{Average cost of different recourse methods applied to neural network models on three datasets. The cost is measured in terms of L2 norm.}
\label{tab:cost-nonlinar-l2}
\end{table}

The experimental results of average validity on logistic regression and neural network models are respectively shown in Figure~\ref{fig:validity-linear} and Figure~\ref{fig:validity-nonlinear}. The results of average cost on logistic regression and neural network models are respectively shown in Table~\ref{tab:cost-linear-l2} and Table~\ref{tab:cost-nonlinar-l2}.

We first look at the results on logistic regression models. As can be seen in Figure~\ref{fig:validity-linear}, the proposed method, \ours, achieves 100\% average validity in almost all experimental settings for logistic regression. This result validates the strong theoretical guarantee on validity stated in Theorem~\ref{thm:validity-cost-linear}. As a comparison, all the baseline methods suffer from significant drops in terms of average validity in some or all experimental settings. 

In terms of the tradeoff between cost (Table~\ref{tab:cost-linear-l2}) and validity (Figure~\ref{fig:validity-linear}), while SCFE always has the lowest cost, it has significantly worse validity than all other methods even for $\alpha = 0.5\%$; C-CHVAE is also inferior to the proposed method as it has both significantly higher costs on all datasets and worse validity on COMPAS and Adult; ROAR is closer to our method but our method consistently outperforms ROAR in terms of validity and has smaller or similar costs than ROAR. Overall, the empirical results both validate our theoretical analysis in Theorem~\ref{thm:validity-cost-linear} and verify that the proposed method outperforms baseline methods.

Next, we look at the results on neural network models. As the change of models after data removals becomes less predictable for these complex models, the performance of counterfactual explanation methods is more dataset dependent. However, we still see that the proposed method is consistently among the best performing methods.

On COMPAS dataset (Figure~\ref{fig:compas-nonlinear}), the proposed method clearly outperforms baseline methods in terms of validity. On Adult dataset (Figure~\ref{fig:adult-nonlinear}), SCFE and C-CHVAE are significantly worse in validity except for on the original model ($\alpha = 0\%$); the proposed method performs similarly as ROAR in terms of both validity and cost.
On German Credit dataset (Figure~\ref{fig:german-nonlinear}), the results of validity seem to be counter-intuitive: the average validity becomes 100\% for all methods after removing a larger fraction of training data. This is possibly because the dataset is small and the decision boundary of the complex models changes dramatically after data removal. In addition, there are only 27 negative test samples under the original neural network model. The dramatical change in decision boundary may make all the test samples suddenly lie in a positive area. Nevertheless, on this dataset, C-CHVAE has the best validity but also with the highest cost. The proposed method has a similar validity as ROAR with a smaller cost. 

\section{Conclusions}
In this work, we make one of the initial attempts at addressing the operational gaps between the right to explanation and the right to be forgotten.
In particular, enforcing the right to be forgotten may invalidate actionable (counterfactual) explanations, which in turn violates the right to explanation. To resolve the tension between these two principles, we propose the first algorithmic framework, ROCERF, which generates counterfactual explanations that are provably robust to model updates triggered as a consequence of data deletion requests. The proposed framework not only enjoys theoretical guarantees on validity and cost, but also outperforms several other state-of-the-art counterfactual explanation methods. Our theoretical and empirical results establish that our framework \ours enables us to simultaneously enforce both the right to explanation as well as the right to be forgotten, thus bridging a critical operational gap between the two regulatory principles.

\bibliography{reference}
\bibliographystyle{icml2023}

\newpage
\appendix
\section{The Optimization Algorithm}
\label{app:algorithm}

\begin{algorithm}
\caption{ROCERF.}
\label{alg:opt}
\textbf{Input}: $x_0, f_{\mathcal{A}}^{(k)}, \delta, \mathcal{X}, T$.\\
\textbf{Output}: $\tilde{x}_0^{(k)}$.
\begin{algorithmic}[1]  
\STATE Set $\lambda = 0.1$;
\STATE Set $x' = \arg\min_{x\in \mathcal{X}} \lambda\phi(\delta - f_{\mathcal{A}}^{(k)}(x)) + \|x - x_0\|_2$;
\STATE \texttt{// Find initial left value $\lambda$}
\WHILE{$f_{\mathcal{A}}^{(k)}(x') \ge \delta$}
    \STATE Set $\lambda = \lambda / 2$;
    \STATE Set $x' = \arg\min_{x\in \mathcal{X}} \lambda\phi(\delta - f_{\mathcal{A}}^{(k)}(x)) + \|x - x_0\|_2$;
\ENDWHILE
\STATE Set $\lambda' = \lambda$;
\STATE \texttt{// Find initial right value $\lambda'$}
\WHILE{$f_{\mathcal{A}}^{(k)}(x') < \delta$}
    \STATE Set $\lambda' = \lambda' \times 2$;
    \STATE Set $x' = \arg\min_{x\in \mathcal{X}} \lambda\phi(\delta - f_{\mathcal{A}}^{(k)}(x)) + \|x - x_0\|_2$;
\ENDWHILE
\STATE \texttt{// Binary search between $\lambda$ and $\lambda'$}
\FOR{$t=1, 2, \ldots, T$}
    \STATE Set $\lambda_t = (\lambda + \lambda') / 2$;
    \STATE Set $x_t = \arg\min_{x\in \mathcal{X}} \lambda_t\phi(\delta - f_{\mathcal{A}}^{(k)}(x)) + \|x - x_0\|_2$;
    \IF{$f_{\mathcal{A}}^{(k)}(x_t) < \delta$}
        \STATE Set $\lambda = \lambda_t$;
    \ELSE
        \STATE Set $\lambda' = \lambda_t$;
    \ENDIF
\ENDFOR
\STATE Set $\tilde{x}_0^{(k)} = \arg\min_{x\in \mathcal{X}} \lambda\phi(\delta - f_{\mathcal{A}}^{(k)}(x)) + \|x - x_0\|_2$;
\STATE \textbf{Return} $\tilde{x}_0^{(k)}$;
\end{algorithmic}
\end{algorithm}

\section{Proof of Theorem~\ref{thm:validity-cost-linear}}
\label{app:proof-validity-cost-linear}

\subsection{Lemmas}
We start by introducing a few useful lemmas.

For any $w\in \mathcal{W}^{(k)}$, define the following LKO estimator of $\htheta_{w}$:
\begin{equation}
    \label{eq:approx-theta}
    \ttheta_w := \htheta_{\vone} + H^{-1}\left(\frac{1}{n}\sum_{i: w_i=0} g_i(\htheta_{\vone})\right).
\end{equation}
Note that for linear models, Eq.~(\ref{eq:taylor-explicit-form-linear}) can be rewritten as $\tilde{f}_{\htheta_w}(x) = \ttheta_w^T x$.

The difference between $\ttheta_{w}$ and $\htheta_{w}$ can be bounded by the following lemma.
\begin{lemma}[Corollary 1 in \citet{Giordano2019-mj}]
\label{lemma:lko-approx-error}
Let $H(\theta) = \frac{1}{n}\sum_{i=1}^n h_i(\theta)$. Assume the following quantities are bounded by constants independent of $n$: (1) $\sup_{\theta\in \Theta} \|H(\theta)^{-1}\|_{\textrm{op}}$; (2) $\frac{1}{n}\sum_{i=1}^n \|g_i(\theta)\|_2^2$; (3) $\frac{1}{n}\sum_{i=1}^n \|h_i(\theta)\|_F^2$. Also assume that there exists a suitable $\Delta > 0$, such that the following quantity is bounded by a constant independent of $n$: $\sup_{\|\theta - \htheta_{\vone}\|_2<\Delta} \frac{1}{n} \sum_{i=1}^n \|h_i(\theta) - h_i(\htheta_{\vone})\|_F / \|\theta - \htheta_{\vone}\|_2$. 
Then for any small integer $k$, there exists a constant $C_1$ independent of $n$, such that
\begin{align}
    \sup_{w\in \mathcal{W}^{(k)}} \|\ttheta_w - \htheta_w\|_2 \le \frac{k C_1}{n}. \label{eq:lko-approx-error}
\end{align}
\end{lemma}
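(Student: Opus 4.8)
The plan is to view $\ttheta_w$ as the exact solution of the \emph{linearized} stationarity equation defining the retrained estimator $\htheta_w$, and then bound the gap between $\htheta_w$ and this linearization by a quantitative implicit-function (Newton--Kantorovich) argument, converting the averaged moment assumptions into an $O(k/n)$ error by Cauchy--Schwarz.

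First I would rewrite both estimators through first-order optimality. Since rescaling the objective leaves the minimizer unchanged, $\htheta_w$ solves $F_w(\theta) := \frac{1}{n}\sum_{i:w_i=1} g_i(\theta) = 0$, while $\htheta_{\vone}$ solves $\frac{1}{n}\sum_{i=1}^n g_i(\theta)=0$. Evaluating $F_w$ at $\htheta_{\vone}$ and using $\sum_{i=1}^n g_i(\htheta_{\vone})=0$ gives $F_w(\htheta_{\vone}) = -\frac{1}{n}\sum_{i:w_i=0} g_i(\htheta_{\vone})$, and differentiating gives the Jacobian $F_w'(\htheta_{\vone}) = H - \frac{1}{n}\sum_{i:w_i=0} h_i(\htheta_{\vone}) =: H + E_w$. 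A direct computation then shows that $\ttheta_w$ from Eq.~(\ref{eq:approx-theta}) is precisely the solution of the linear system $F_w(\htheta_{\vone}) + H\,(\theta-\htheta_{\vone}) = 0$, i.e.\ the Newton step for $F_w$ started at $\htheta_{\vone}$ in which the true Jacobian $F_w'(\htheta_{\vone})$ has been replaced by the full-data Hessian $H$.

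Next I would isolate the error. Expanding $0 = F_w(\htheta_w)$ to first order around $\htheta_{\vone}$ with a second-order remainder $R_w$ and subtracting the exact linear identity satisfied by $\ttheta_w$, the leading terms cancel and one is left with
\[ H\,(\htheta_w - \ttheta_w) = -E_w\,(\htheta_w - \htheta_{\vone}) - R_w. \]
Thus $\|\htheta_w - \ttheta_w\|_2 \le C_4\big(\|E_w\|_{\textrm{op}}\,\|\htheta_w - \htheta_{\vone}\|_2 + \|R_w\|_2\big)$ by assumption~(1). It remains to show each summand is $O(k/n)$, uniformly in $w$. Two preliminary estimates, both via Cauchy--Schwarz over the $k$ removed indices, do the work: $\|F_w(\htheta_{\vone})\|_2 \le \frac{1}{n}\sqrt{k}\,(\sum_{i}\|g_i(\htheta_{\vone})\|_2^2)^{1/2} = O(\sqrt{k/n})$ using assumption~(2), and likewise $\|E_w\|_{\textrm{op}} = O(\sqrt{k/n})$ using assumption~(3). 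A Newton--Kantorovich step (invertibility of the Jacobian near $\htheta_{\vone}$ from assumption~(1), smoothness from the Lipschitz-Hessian assumption) then yields both that $\htheta_w$ exists inside the $\Delta$-ball and that $\|\htheta_w - \htheta_{\vone}\|_2 \lesssim \|F_w(\htheta_{\vone})\|_2 = O(\sqrt{k/n})$. Feeding these back, the Jacobian-mismatch term is $O(\sqrt{k/n})\cdot O(\sqrt{k/n}) = O(k/n)$, and the Lipschitz-Hessian assumption bounds the remainder by $\|R_w\|_2 \lesssim \|\htheta_w - \htheta_{\vone}\|_2^2 = O(k/n)$. Combining gives $\|\htheta_w - \ttheta_w\|_2 \le k C_1/n$ for a constant $C_1$ independent of $n$.

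The uniformity over all $w \in \mathcal{W}^{(k)}$ is immediate because every bound is routed through the $n$-averaged quantities in assumptions~(1)--(3) and the Lipschitz-Hessian assumption, none of which depend on \emph{which} $k$ indices are removed; the choice of $w$ enters only through Cauchy--Schwarz inequalities of the form $\frac{1}{n}\sum_{i:w_i=0}\|\cdot\| \le \sqrt{k/n}\,(\tfrac1n\sum_i\|\cdot\|^2)^{1/2}$, so taking $\sup_w$ costs nothing. I expect the main obstacle to be the self-consistency in the Newton--Kantorovich step: to invoke the Lipschitz-Hessian bound one must already know $\htheta_w$ lies in the $\Delta$-neighborhood and that $F_w'$ stays invertible there, yet that localization is what the argument is trying to establish. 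This circularity is resolved by a contraction/continuity bootstrap---for $n$ large the map whose fixed point is $\htheta_w$ is a contraction on a small ball around $\htheta_{\vone}$---and by checking that \emph{both} error sources, the Jacobian mismatch $E_w$ and the remainder $R_w$, are genuinely second order rather than $O(\sqrt{k/n})$.
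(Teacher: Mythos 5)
The paper never proves this lemma: it is imported verbatim as Corollary~1 of \citet{Giordano2019-mj}, so your blind attempt amounts to reconstructing the cited result from scratch --- and as a proof plan it is essentially correct, and matches the mechanism actually underlying that result. Your identification of $\ttheta_w$ as the Newton step for the deleted-data stationarity equation $F_w(\theta)=0$ started at $\htheta_{\vone}$, with the full-data Hessian $H$ substituted for the true Jacobian $H+E_w$, is exactly right (one checks $H(\ttheta_w-\htheta_{\vone}) = -F_w(\htheta_{\vone})$ directly from Eq.~(\ref{eq:approx-theta})), and subtracting the two identities does give $H(\htheta_w-\ttheta_w) = -E_w(\htheta_w-\htheta_{\vone}) - R_w$. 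Your Cauchy--Schwarz step is also the right accounting: assumptions (2)--(3) give $\|F_w(\htheta_{\vone})\|_2 = O(\sqrt{k/n})$ and $\|E_w\|_{\mathrm{op}} = O(\sqrt{k/n})$ uniformly over $w\in\mathcal{W}^{(k)}$, so both error sources are genuinely second order --- the Jacobian mismatch contributes $O(\sqrt{k/n})\cdot O(\sqrt{k/n})$ and the Lipschitz-Hessian assumption bounds $\|R_w\|_2 \lesssim \|\htheta_w-\htheta_{\vone}\|_2^2$ --- yielding the advertised $kC_1/n$. This is in substance the same quantitative fixed-point argument that Giordano et al.\ run, so you have recovered the proof of a statement the paper only cites.

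The one genuine subtlety is the localization issue you flag yourself, and it deserves to be stated as a hypothesis rather than resolved by a bootstrap alone: Newton--Kantorovich produces \emph{a} root of $F_w$ within the $\Delta$-ball, but $\htheta_w$ is defined as a global argmin, and nothing in the lemma's stated assumptions prevents the argmin from sitting in a distant basin where the Lipschitz-Hessian bound is unavailable; relatedly, lower-bounding the singular values of the averaged Jacobian along the segment uses symmetry and positive-definiteness of the $h_i$ near the optimum, not merely invertibility of $H(\theta)$. The paper quietly covers this with its footnote assuming uniqueness of $\htheta_w$, and in the setting where the lemma is actually deployed (regularized logistic regression) strong convexity holds, so the argument of Lemma~\ref{lemma:theta-diff} gives $\|\htheta_w - \htheta_{\vone}\|_2 = O(k/n)$ directly and closes your contraction step immediately. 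Make that uniqueness (or strong-convexity) hypothesis explicit and your plan is complete.
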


We also have the following results for strongly convex models.
\begin{lemma}[Lemma 8 in \citet{Neel2021-iu}]
    \label{lemma:strong-convexity}
    Suppose $l:\Theta \rightarrow \sR$ is $\mu$-strongly convex and let $\theta^* = \arg\min_{\theta\in \Theta}l(\theta)$. We have that for any $\theta\in \Theta$, $l(\theta) \ge l(\theta^*) + \frac{\mu}{2}\|\theta - \theta^*\|_2^2$.
\end{lemma}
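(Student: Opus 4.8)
The plan is to prove this directly from the definition of $\mu$-strong convexity together with the global optimality of $\theta^*$, without invoking differentiability (the paper's losses are $C^2$, but the cleanest argument needs only convexity of the domain $\Theta$). Recall that $l$ being $\mu$-strongly convex means that for all $\theta, \theta' \in \Theta$ and all $\alpha \in [0,1]$,
\[
l(\alpha\theta + (1-\alpha)\theta') \le \alpha l(\theta) + (1-\alpha)l(\theta') - \frac{\mu}{2}\alpha(1-\alpha)\|\theta - \theta'\|_2^2.
\]

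First I would instantiate this inequality with $\theta' = \theta^*$ for an arbitrary $\theta \in \Theta$ and a parameter $\alpha \in (0,1)$. Since $\Theta$ is convex, the point $\alpha\theta + (1-\alpha)\theta^*$ lies in $\Theta$, and because $\theta^*$ is the global minimizer of $l$ over $\Theta$, the left-hand side is bounded below by $l(\theta^*)$. This yields
\[
l(\theta^*) \le \alpha l(\theta) + (1-\alpha)l(\theta^*) - \frac{\mu}{2}\alpha(1-\alpha)\|\theta - \theta^*\|_2^2.
\]

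Next I would rearrange: subtracting $(1-\alpha)l(\theta^*)$ from both sides gives $\alpha\, l(\theta^*) \le \alpha\, l(\theta) - \frac{\mu}{2}\alpha(1-\alpha)\|\theta-\theta^*\|_2^2$, and dividing through by $\alpha > 0$ produces $l(\theta^*) \le l(\theta) - \frac{\mu}{2}(1-\alpha)\|\theta - \theta^*\|_2^2$. Finally, letting $\alpha \to 0^+$ removes the factor $(1-\alpha)$ and delivers exactly $l(\theta) \ge l(\theta^*) + \frac{\mu}{2}\|\theta - \theta^*\|_2^2$, as desired.

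There is no substantive obstacle here; this is a standard consequence of strong convexity. The only points that warrant care are (i) ensuring the convex combination $\alpha\theta + (1-\alpha)\theta^*$ remains in the feasible set so that the minimality of $\theta^*$ applies, which holds whenever $\Theta$ is convex, and (ii) justifying the limit $\alpha \to 0^+$, which is immediate since the inequality holds for every fixed $\alpha \in (0,1)$ and both sides are continuous in $\alpha$. If one instead prefers a gradient-based argument under differentiability, an equivalent route is to use the first-order characterization $l(\theta) \ge l(\theta^*) + \langle \nabla l(\theta^*), \theta - \theta^*\rangle + \frac{\mu}{2}\|\theta - \theta^*\|_2^2$ and discard the inner-product term via the first-order optimality condition $\langle \nabla l(\theta^*), \theta - \theta^*\rangle \ge 0$ at the minimizer.
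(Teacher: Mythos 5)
Your proof is correct, but there is no in-paper proof to compare it against: the paper imports this statement verbatim as Lemma~8 of \citet{Neel2021-iu} and uses it as a black box, so any self-contained argument is necessarily a different (indeed, the only) route. Your argument is the standard one and is sound: instantiating the secant definition of $\mu$-strong convexity at $\theta' = \theta^*$, lower-bounding $l(\alpha\theta + (1-\alpha)\theta^*)$ by $l(\theta^*)$ via global minimality, dividing by $\alpha > 0$, and sending $\alpha \to 0^+$ all go through, and your two stated caveats are exactly the right ones --- convexity of $\Theta$ is needed so that the convex combination stays feasible (the paper implicitly assumes this, e.g.\ $\Theta$ a ball of bounded-norm parameters), and the limit is justified since the inequality holds for each fixed $\alpha \in (0,1)$ with $\alpha$ entering only through the factor $(1-\alpha)$. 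A minor virtue of your chosen route over the gradient-based alternative you sketch is that it requires no differentiability of $l$ and no interiority of $\theta^*$; if one does take the first-order route, the optimality condition must be the variational inequality $\langle \nabla l(\theta^*), \theta - \theta^*\rangle \ge 0$ rather than $\nabla l(\theta^*) = 0$ when $\theta^*$ may lie on the boundary of $\Theta$, which you correctly note. One could add that the proof shows the constant $\frac{\mu}{2}$ is exactly the one from the strong-convexity modulus, with no loss from the optimality step, which is what the downstream sensitivity bound in Lemma~\ref{lemma:theta-diff} relies on.
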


\begin{lemma}
\label{lemma:theta-diff}
Assume $l_i, i=1,\ldots, n$ are $L$-Lipschitz and $\mu$-strongly convex. For a fixed positive integer $k$, there exists a constant $C_2$ independent of $n$, such that for any $w\in \mathcal{W}^{k}$,
\begin{align}
    \|\ttheta_w - \htheta_{\vone}\|_2 \le \frac{k C_2}{n}.
\end{align}
\end{lemma}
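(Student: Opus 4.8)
The plan is to bound $\|\ttheta_w - \htheta_{\vone}\|_2$ directly from the closed-form definition of $\ttheta_w$ in Eq.~(\ref{eq:approx-theta}), rather than routing through the sharper Lemma~\ref{lemma:lko-approx-error}. Subtracting $\htheta_{\vone}$ from both sides of Eq.~(\ref{eq:approx-theta}) isolates the difference as
\[
\ttheta_w - \htheta_{\vone} = H^{-1}\left(\frac{1}{n}\sum_{i:w_i=0} g_i(\htheta_{\vone})\right),
\]
so the entire estimate reduces to controlling the operator norm of $H^{-1}$ together with the sum of the $k$ gradient vectors indexed by $\{i : w_i = 0\}$.

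First I would apply submultiplicativity of the operator norm and the triangle inequality to obtain
\[
\|\ttheta_w - \htheta_{\vone}\|_2 \le \|H^{-1}\|_{\textrm{op}} \cdot \frac{1}{n}\sum_{i:w_i=0}\|g_i(\htheta_{\vone})\|_2.
\]
Next I would bound the two factors using the two hypotheses. Strong convexity of each summand gives $h_i(\htheta_{\vone}) \succeq \mu I$, hence $H = \frac{1}{n}\sum_{i=1}^n h_i(\htheta_{\vone}) \succeq \mu I$, so the smallest eigenvalue of $H$ is at least $\mu$ and $\|H^{-1}\|_{\textrm{op}} \le 1/\mu$. The $L$-Lipschitz assumption bounds each gradient uniformly, $\|g_i(\htheta_{\vone})\|_2 \le L$. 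Since exactly $k$ indices satisfy $w_i = 0$ for any $w\in \mathcal{W}^{(k)}$, the sum contains $k$ terms, each at most $L$, so $\frac{1}{n}\sum_{i:w_i=0}\|g_i(\htheta_{\vone})\|_2 \le kL/n$.

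Combining these yields
\[
\|\ttheta_w - \htheta_{\vone}\|_2 \le \frac{1}{\mu}\cdot\frac{kL}{n} = \frac{k (L/\mu)}{n},
\]
so the claim holds with $C_2 := L/\mu$, which is independent of $n$ and uniform over $w\in \mathcal{W}^{(k)}$. I do not expect any substantive obstacle here: the argument is a direct triangle-inequality estimate, and the only points requiring care are the two standard facts that $\mu$-strong convexity of the summands forces $\|H^{-1}\|_{\textrm{op}}\le 1/\mu$ and that $L$-Lipschitzness yields the uniform gradient bound $\|g_i\|_2\le L$. Both are only needed at the single point $\htheta_{\vone}$, so no additional regularity beyond the stated hypotheses is invoked.
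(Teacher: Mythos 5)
Your proof is correct, and it takes a genuinely different and more economical route than the paper's. The paper bounds $\|\ttheta_w - \htheta_{\vone}\|_2$ by the triangle inequality through the \emph{actual} retrained parameter $\htheta_w$: it invokes Lemma~\ref{lemma:lko-approx-error} (the Giordano et al.\ leave-$k$-out approximation error, which carries its own stack of regularity hypotheses) to get $\|\ttheta_w - \htheta_w\|_2 \le kC_1/n$, and then runs a sensitivity argument \`a la Neel et al.\ --- comparing the full-data loss at $\htheta_w$ and $\htheta_{\vone}$ via the minimizer property, $L$-Lipschitzness, and Lemma~\ref{lemma:strong-convexity} --- to get $\|\htheta_w - \htheta_{\vone}\|_2 \le 2kL/(n\mu)$, yielding $C_2 = C_1 + 2L/\mu$. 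You instead observe that $\ttheta_w$ is given in closed form by Eq.~(\ref{eq:approx-theta}), so the difference $\ttheta_w - \htheta_{\vone} = H^{-1}\bigl(\frac{1}{n}\sum_{i:w_i=0} g_i(\htheta_{\vone})\bigr)$ can be bounded directly: strong convexity (with the paper's standing assumption of continuous second derivatives) gives $H \succeq \mu I$ hence $\|H^{-1}\|_{\textrm{op}} \le 1/\mu$, Lipschitzness gives $\|g_i(\htheta_{\vone})\|_2 \le L$, and there are exactly $k$ summands, giving the cleaner constant $C_2 = L/\mu$. Your argument needs only pointwise information at $\htheta_{\vone}$ and bypasses both auxiliary lemmas; what the paper's detour buys is a bound on $\|\htheta_w - \htheta_{\vone}\|_2$ itself (the true model shift), which is conceptually relevant elsewhere but not needed for this statement. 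Since downstream uses of the lemma only require that $C_2$ be some $n$-independent constant, your version slots in without change.
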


\begin{proof}[Proof of Lemma~\ref{lemma:theta-diff}]

We bound $\|\ttheta_w - \htheta_{\vone}\|_2$ by the summation of $\|\ttheta_w - \htheta_w\|_2$ and $\|\htheta_w - \htheta_{\vone}\|_2$. From Lemma~\ref{lemma:lko-approx-error}, we already have $\|\ttheta_w - \htheta_w\|_2 \le \frac{k C_1}{n}$. We now bound $\|\htheta_w - \htheta_{\vone}\|_2$ largely following the proof of Lemma 8 (Sensitivity) in \citet{Neel2021-iu}.

WLOG, assume the the first $k$ data points are removed in $w$, i.e., $w_1 = w_2 = \ldots = w_k = 0$ while $w_{k+1} = \ldots = w_n = 1$. Then we have

\begin{align}
    \frac{1}{n}\sum_{i=1}^n l_i(\htheta_w) &= \frac{n-k}{n} \frac{1}{n-k}\sum_{i=k+1}^n l_i(\htheta_w) + \frac{1}{n}\sum_{i=1}^k l_i(\htheta_w) \nonumber \\
    &\le \frac{n-k}{n} \frac{1}{n-k}\sum_{i=k+1}^n l_i(\htheta_{\vone}) + \frac{1}{n}\sum_{i=1}^k l_i(\htheta_w) \label{eq:minimizer} \\
    &= \frac{1}{n}\sum_{i=k+1}^n l_i(\htheta_{\vone}) + \frac{1}{n}\sum_{i=1}^k \left(l_i(\htheta_w) - l_i(\htheta_{\vone})\right) \nonumber \\
    &\le \frac{1}{n}\sum_{i=k+1}^n l_i(\htheta_{\vone}) + \frac{kL}{n}\|\htheta_w - \htheta_{\vone}\|_2, \label{eq:lipschitz}
\end{align}
where (\ref{eq:minimizer}) is because $\htheta_w$ is the minimizer of $\frac{1}{n-k}\sum_{i=k+1}^n l_i(\theta)$, while in (\ref{eq:lipschitz}) we have utilized the fact that each $l_i$ is $L$-Lipschitz.

On the other hand, by Lemma~\ref{lemma:strong-convexity}, we have
\[\frac{1}{n}\sum_{i=1}^n l_i(\htheta_w) \ge \frac{1}{n}\sum_{i=k+1}^n l_i(\htheta_{\vone}) + \frac{\mu}{2} \|\htheta_w - \htheta_{\vone}\|_2^2.\]

Combining the two inequalities above, we have $\|\htheta_w - \htheta_{\vone}\|_2 \le \frac{k(2L/\mu)}{n}$.

Therefore, letting $C_2 = C_1 + \frac{2L}{\mu}$, we have

\[\|\ttheta_w - \htheta_{\vone}\|_2 \le \|\ttheta_w - \htheta_w\|_2 + \|\htheta_w - \htheta_{\vone}\|_2 \le \frac{kC_2}{n}.\]

\end{proof}

\subsection{Useful Facts of Regularized Logistic Regression}
Define $\sigma(x; \theta) = \frac{1}{1+\exp(-\theta^T x)}$. For regularized logistic regression with the loss defined as $l_i(\theta) = \log(1 + \exp(-y_i \theta^T x_i)) + \gamma \|\theta\|_2^2$, we have
\begin{align}
    g_i(\theta) &= -\frac{1}{1 + \exp(y_i\theta^T x_i)}(y_i x_i) + \gamma \theta, \label{eq:LR-g}\\
    h_i(\theta) &= \sigma(x_i; \theta) (1 - \sigma(x_i; \theta)) x_i x_i^T + \gamma I. \label{eq:LR-h}
\end{align}

We can verify that regularized logistic regression satisfies all the assumptions in Lemma~\ref{lemma:lko-approx-error}. First, we know that the eigen value of the Hessian is lower-bounded by $\gamma$, so the eigen value of the inverse Hessian is upper bounded by $1/\gamma$. Hence $\sup_{\theta\in \Theta} \|H(\theta)^{-1}\|_{\textrm{op}}$ is bounded. Next, under the assumption that both the feature vector and model parameters have bounded norm, it is easy to show that $\|g_i(\theta)\|_2$ and $\|h_i(\theta)\|_F$ are bounded from Eq.~(\ref{eq:LR-g}) and Eq.~(\ref{eq:LR-h}). Hence both $\frac{1}{n}\sum_{i=1}^n \|g_i(\theta)\|_2^2$ and $\frac{1}{n}\sum_{i=1}^n \|h_i(\theta)\|_F^2$ are bounded. Finally, $h_i(\theta)$ is Lipschitz continuous so the last assumption is also verified.

We can also verify that $l_i$ are Lipschitz and strongly convex so the regularized logistic regression satisfies the assumptions in Lemma~\ref{lemma:theta-diff}.

\subsection{Proof of Theorem~\ref{thm:validity-cost-linear}}

Next, we are ready to prove Theorem~\ref{thm:validity-cost-linear}.

\begin{proof}[Proof of Theorem~\ref{thm:validity-cost-linear}]

The validity of $\tilde{x}_0^{(k)}$ holds if for any $w\in \mathcal{W}^{(k)}$ and $x\in \mathcal{X}$, $\tilde{f}_{\htheta_w}(x) \ge \delta$ implies $f_{\htheta_w}(x) \ge 0$. Now we investigate the choice of $\delta$ that guarantees the above condition holds while not being too large.

Under the linear model assumption, for any $w\in \mathcal{W}^{(k)}$, we have
\begin{align*}
    f_{\htheta_w}(x) &= \htheta_w^T x \\
    &= \ttheta_w^T x + (\htheta_w - \ttheta_w)^T x \\
    &\ge \tilde{f}_{\htheta_w}(x) - \|\htheta_w - \ttheta_w\|_2 \|x\|_2 \\
    &\ge \tilde{f}_{\htheta_w}(x) - \frac{kC_1}{n}\cdot B,
\end{align*}
where recall that $B=\sup_{x\in \mathcal{X}}\|x\|_2$.

If we set $\delta = \frac{kC_1 B}{n}$, then $\tilde{f}_{\htheta_w}(x) \ge \delta$ implies $f_{\htheta_w}(x) \ge 0$, in which case the validity of $\tilde{x}_0^{(k)}$ is guaranteed.

For the cost, as $\tilde{x}_0^{(k)}$ is the minimizer of the problem~(\ref{eq:krr-cfe-approximate}), we have $\|\tilde{x}_0^{(k)} - x_0\|_2 \le \|x - x_0\|_2$ for any $x$ in the feasible set of the the problem~(\ref{eq:krr-cfe-approximate}). Furthermore, for any $x$, $\|x - x_0\|_2 \le \|\tilde{x}_0 - x_0\| + \|x - \tilde{x}_0\|$. So we only need to focus on the bound of $\|x - \tilde{x}_0\|_2$ for some $x$ in the feasible set.

To begin with, we make the following transformation of $\tilde{f}_{\htheta_w}(x)$.
\begin{align*}
    \tilde{f}_{\htheta_w}(x) &= \ttheta_w^T x \\
     &= \htheta_{\vone}^T \tilde{x}_0 + \htheta_{\vone}^T (x - \tilde{x}_0) + (\ttheta_w - \htheta_{\vone})^T x \\
     &\ge \htheta_{\vone}^T (x - \tilde{x}_0) + (\ttheta_w - \htheta_{\vone})^T x && \text{$\htheta_{\vone}^T \tilde{x}_0 \ge 0$ by Definition~\ref{def:cfe}} \\
     &\ge \htheta_{\vone}^T (x - \tilde{x}_0) - \|\ttheta_w - \htheta_{\vone}\|_2 \|x\|_2 \\
     &\ge \htheta_{\vone}^T (x - \tilde{x}_0) - \frac{kC_2 B}{n}. && \text{Lemma~\ref{lemma:theta-diff}}
\end{align*}
For $x$ to be in a feasible set, it suffices to have $\tilde{f}_{\htheta_w}(x) \ge \delta$ for all $w\in \mathcal{W}^{(k)}$. Set $\delta = \frac{kC_1 B}{n}$ and let $x' = \tilde{x}_0 + \frac{kC}{n\|\htheta_{\vone}\|_2^2}\htheta_{\vone}$, where $C:= (C_1 + C_2)B$. Then for any $w$, we have
\begin{align*}
    \tilde{f}_{\htheta_w}(x') - \delta &\ge \htheta_{\vone}^T (x' - \tilde{x}_0) - \frac{kC_2 B}{n} - \frac{kC_1 B}{n} = 0.
\end{align*}

So $x'$ is in the feasible set. Therefore,
\[\|\tilde{x}_0^{(k)} - x_0\|_2 \le \|x' - x_0\|_2 \le \|\tilde{x}_0 - x_0\|_2 + \frac{kC}{n\|\htheta_{\vone}\|_2}.\]

\end{proof}

\section{Proof of Theorem~\ref{thm:validity-cost-nonlinear}}
\label{app:proof-validity-cost-nonlinear}

We first apply a result from \citet{Broderick2020-lg} to bound the difference $|\tilde{f}_{\htheta_w}(x) - f_{\htheta_w}(x)|$.

\begin{lemma}[Direct Application of Theorem 1 in \citet{Broderick2020-lg}]
\label{lemma:fw-diff}
Under Assumption~\ref{assump:regularity}, there exists a constant $C_f$ independent of $n$, such that
\[\sup_{x\in \mathcal{X}, w\in \mathcal{W}^{(k)}} |\tilde{f}_{\htheta_w}(x) - f_{\htheta_w}(x)| < \frac{kC_f}{n}.\]
\end{lemma}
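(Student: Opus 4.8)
The plan is to read Lemma~\ref{lemma:fw-diff} as a single instance of the generic infinitesimal-jackknife (IJ) accuracy guarantee of \citet{Broderick2020-lg}, specialized to the scalar \emph{quantity of interest} given by the classifier output at a query point. Concretely, for each fixed $x\in\mathcal{X}$ I would set $\psi_x(\theta):=f_\theta(x)$ and note that $\nabla\psi_x(\htheta_{\vone})=\beta(x)$. Using the chain rule of Eq.~(\ref{eq:taylor-chain}) together with the sensitivity identity of Eq.~(\ref{eq:Hg}), the surrogate $\tilde{f}_{\htheta_w}(x)$ of Eq.~(\ref{eq:taylor-explicit-form}) is exactly the first-order weight expansion of $\psi_x(\htheta_w)$ about the full-data weights $w=\vone$, i.e. $\tilde{f}_{\htheta_w}(x)=\psi_x(\htheta_{\vone})+\beta(x)^T(\ttheta_w-\htheta_{\vone})$ with $\ttheta_w$ the LKO estimator of Eq.~(\ref{eq:approx-theta}). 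Hence $|\tilde{f}_{\htheta_w}(x)-f_{\htheta_w}(x)|$ is precisely the IJ approximation error for $\psi_x$, which is the object that Theorem~1 of \citet{Broderick2020-lg} controls.

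The second step is to confirm that Assumption~\ref{assump:regularity} supplies every hypothesis of that theorem, and does so with constants independent of $x$, $w$, and $n$. The loss-side conditions --- the averaged Hessian-norm bound (C1), the per-datum gradient bound (C2), the uniform inverse-Hessian operator-norm bound (C4), and the local Lipschitz continuity of the averaged Hessian (C5) --- are the same regularity conditions that make the LKO surrogate $\ttheta_w$ provably close to $\htheta_w$, and indeed already underlie Lemma~\ref{lemma:lko-approx-error}. The condition that is specific to the quantity of interest is C3, $\sup_{x\in\mathcal{X}}\|\beta(x)\|_2\le C_3$: it bounds $\|\nabla\psi_x(\htheta_{\vone})\|_2$ uniformly and is what prevents the gradient of the functional from blowing up as $x$ ranges over $\mathcal{X}$. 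I would go through these identifications one by one, translating the constants $C_1,\dots,C_5$ into the quantities appearing in the hypotheses of \citet{Broderick2020-lg}.

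Finally I would invoke Theorem~1 of \citet{Broderick2020-lg} to obtain, for each fixed $x$ and each $w\in\mathcal{W}^{(k)}$, a bound of the form $|\tilde{f}_{\htheta_w}(x)-f_{\htheta_w}(x)|\le kC_f/n$, and then take the supremum over $x\in\mathcal{X}$ and $w\in\mathcal{W}^{(k)}$. The step that I expect to demand the most care is the \emph{uniformity} of the constant $C_f$: the cited theorem is stated for one fixed functional, so the real content is tracing through its error constant to verify that it is assembled only from $C_1,\dots,C_5$, that it depends on $x$ only through the uniform bound $C_3$ on $\beta(x)$, and that it depends on $w$ only through the removal count $k$. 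Establishing that no hidden dependence on the query point $x$ survives --- in particular that any higher-order smoothness of $\psi_x$ entering the remainder is itself uniformly controlled under Assumption~\ref{assump:regularity} --- is the crux; once it is secured, the supremum is finite and independent of $n$, and the stated bound follows.
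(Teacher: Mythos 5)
Your proposal follows essentially the same route as the paper, which gives no argument for Lemma~\ref{lemma:fw-diff} beyond the citation itself: treat $\theta\mapsto f_\theta(x)$ as the scalar quantity of interest, recognize $\tilde{f}_{\htheta_w}(x)=f_{\htheta_{\vone}}(x)+\beta(x)^T(\ttheta_w-\htheta_{\vone})$ as its first-order infinitesimal-jackknife expansion, and check that the five conditions of Assumption~\ref{assump:regularity} instantiate the hypotheses of Theorem~1 of \citet{Broderick2020-lg} with constants uniform in $x$, $w$, and $n$. The one caveat is the ``crux'' you yourself flag but do not discharge --- uniform control of the second-order remainder of $\theta\mapsto f_\theta(x)$, which Assumption~\ref{assump:regularity} bounds only at first order via $\|\beta(x)\|_2\le C_3$ --- yet the paper passes over exactly the same point, so your account is at least as complete as the one it stands in for.
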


Next, we use this result to prove Theorem~\ref{thm:validity-cost-nonlinear}.
\begin{proof}[Proof of Theorem~\ref{thm:validity-cost-nonlinear}]
Similarly as Theorem~\ref{thm:validity-cost-linear}, the validity of $\tilde{x}_0^{(k)}$ holds if for any $w\in \mathcal{W}^{(k)}$ and $x\in \mathcal{X}$, $\tilde{f}_{\htheta_w}(x) \ge \delta$ implies $f_{\htheta_w}(x) \ge 0$. By Lemma~\ref{lemma:fw-diff}, we know that setting $\delta = \frac{kC_f}{n}$ suffices to guarantee the validity.

For the cost, similarly as Theorem~\ref{thm:validity-cost-linear}, we only need to bound $\|x - \tilde{x}_0\|_2$ for some feasible $x$ in problem~\ref{eq:krr-cfe-approximate}.

For any $x\in \mathcal{X}$ and $w\in \mathcal{W}^{(k)}$, we have
\begin{align*}
    \tilde{f}_{\htheta_w}(x) &= f_{\htheta_{\vone}}(\tilde{x}_0) + f_{\htheta_{\vone}}(x) - f_{\htheta_{\vone}}(\tilde{x}_0) + \tilde{f}_{\htheta_w}(x) - f_{\htheta_{\vone}}(x) \\
     &\ge f_{\htheta_{\vone}}(x) - f_{\htheta_{\vone}}(\tilde{x}_0) + \tilde{f}_{\htheta_w}(x) - f_{\theta_{\vone}}(x) && \text{$f_{\htheta_{\vone}}(\tilde{x}_0) \ge 0$ by Definition~\ref{def:cfe}} \\
     &= f_{\htheta_{\vone}}(x) - f_{\htheta_{\vone}}(\tilde{x}_0) + \frac{1}{n} \sum_{i:w_i=0} \beta(x)^T  H^{-1} g_i(\htheta_{\vone}) && \text{Eq.~(\ref{eq:taylor-explicit-form})} \\
     &\ge f_{\htheta_{\vone}}(x) - f_{\htheta_{\vone}}(\tilde{x}_0) - \frac{k C_2 C_3 C_4}{n}. && \text{Assumption~\ref{assump:regularity}}
\end{align*}

For an $x$ to be feasible, it needs to satisfy $\tilde{f}_{\htheta_w}(x) \ge \delta$ for all $w\in \mathcal{W}^{(k)}$. Set $\delta = \frac{kC_f}{n}$ and let 
\[x' = \arg\min_{\substack{x\in \mathcal{X}, \\f_{\htheta_{\vone}}(x) - f_{\htheta_{\vone}}(\tilde{x}_0) \ge \frac{kC}{n}}} \|x - \tilde{x}_0\|_2,\]
where $C:= C_f + C_2 C_3 C_4$. Then for any $w$, we have
\[\tilde{f}_{\htheta_w}(x') - \delta \ge f_{\htheta_{\vone}}(x') - f_{\htheta_{\vone}}(\tilde{x}_0) - \frac{kC}{n} \ge 0.\]
So $x'$ is in the feasible set. Therefore,
\[\|\tilde{x}_0^{(k)} - x_0\|_2 \le \|x' - x_0\|_2 \le \|\tilde{x}_0 - x_0\|_2 + \min_{\substack{x\in \mathcal{X}, \\f_{\htheta_{\vone}}(x) - f_{\htheta_{\vone}}(\tilde{x}_0) \ge \frac{kC}{n}}} \|x - \tilde{x}_0\|_2.\]

Finally, if $f_{\htheta_{\vone}}$ is $\mu$-strongly convex, then for any $z\in \mathcal{X}$,
\[f_{\htheta_{\vone}}(z) \ge f_{\htheta_{\vone}}(\tilde{x}_0) + \frac{\partial f_{\htheta_{\vone}}(x)}{\partial x}\eval{\tilde{x}_0} (z - \tilde{x}_0) + \frac{\mu}{2}\|z - \tilde{x}_0\|_2.\]

Denote $v = \left(\frac{\partial f_{\htheta_{\vone}}(x)}{\partial x}\eval{\tilde{x}_0}\right)^T$. Let $z = \tilde{x}_0 + \frac{2kC}{n\mu}\frac{v}{\|v\|_2}$, then 
\[f_{\htheta_{\vone}}(z) - f_{\htheta_{\vone}}(\tilde{x}_0) \ge \frac{kC}{n}.\]
So $z$ is in the feasible set. Therefore, 
\[\|\tilde{x}_0^{(k)} - x_0\|_2 \le \|x' - x_0\|_2 \le \|\tilde{x}_0 - x_0\|_2 + \frac{2kC}{n\mu}.\]
\end{proof}

\section{Experiment Details}
\subsection{More Detailed Experimental Setup}
\label{app:setup}

\paragraph{Model Training. } We train two models for our experiments : (1) Logistic Regression (LR), and (2) Neural Network (NN). For NN, we have three intermediate layers with twice the number of input nodes for each intermediate layer. We apply centered-softplus activation~\citep{Srinivas2022-lr} for each intermediate layer output. The training procedure involved minimizing the standard cross entropy loss using stochastic gradient descent with 0.01 as the learning rate. The accuracy achieved after training for all the datasets is shown in Table~\ref{tab:acc_performance}. 

\begin{table}[h]
    \centering\small
    \renewcommand{\arraystretch}{0.9}
    \setlength{\tabcolsep}{1.5pt}
    \caption{
     The accuracy of LR and ANN models trained on the datasets.
    }
    {\begin{tabular}{lcc}
    \toprule
    {Dataset} & {LR} & {NN} \\
    \toprule
    \begin{tabular}[l]{@{}l@{}}{German~Credit}\\{COMPAS}\\{Adult}
    \end{tabular} &
    \begin{tabular}[c]{@{}c@{}}
    {72.2\%}\\
    {85.8\%}\\
    {84.0\%}\\
    \end{tabular} &
    \begin{tabular}[c]{@{}c@{}}
    73.9\%\\
    85.1\% \\
    84.7\%\\
    \end{tabular} \\
    \bottomrule
    \end{tabular}}
    \label{tab:acc_performance}
\end{table}

\paragraph{Recourse Method Hyperparameters. } We use default hyper-parameter setting for most baseline methods aligned with authors' guidelines. Specifically, we use step size = 0.05 with a sample size of 1000 per iteration for C-CHVAE, $\delta_{max} = 0.1$ for ROAR.

\paragraph{LIME Approximation of Neural Network Models.} For neural network, we learn a local linear approximation of the model using the perturbation-based framework in LIME \cite{ribeiro2016should}. Specifically, we train a logistic regression model on 10,000 perturbations sampled from $\mathcal{N}(0,\,0.1)$ around the input sample.

\subsection{Additional Results}
\label{app:results}

\paragraph{Average cost in terms of L1 norm.} We provide the L1-norm based average cost of different recourse methods in Table~\ref{tab:cost-linear-l1} and Table~\ref{tab:cost-nonlinear-l1}, respectively for logistic regression and neural network models. The relative trend is almost the same as the results of L2-norm based average cost reported in the main paper.

\begin{table*}[h]
\centering
\small
\begin{tabular}{m{2.5cm}|m{2cm}|m{2cm}|m{2cm}m{1cm}m{1cm}m{0.9cm}}
\toprule
\multirow{1}{*}{Methods} & \multicolumn{1}{c|}{German Credit} & \multicolumn{1}{c|}{COMPAS} & \multicolumn{1}{c}{Adult} \\
\midrule
SCFE    & 5.76 $\pm$ 0.92  & 1.91 $\pm$ 0.06  & 2.98 $\pm$ 0.01   \\ 
C-CHVAE & 48.04 $\pm$ 1.83  & 11.71 $\pm$ 0.23 &  10.37 $\pm$ 0.03   \\ 
ROAR    & 9.61 $\pm$ 0.61 & 2.47 $\pm$ 0.04  & 2.61 $\pm$ 0.01 \\ 
\ours (ours) & 9.44 $\pm$ 1.08 & 2.13 $\pm$ 0.05 & 3.33 $\pm$ 0.02 \\ 
\bottomrule

\end{tabular}
\caption{Average cost of different recourse methods applied to logistic regression models on three datasets. The cost is measured in terms of L1 norm.}
\label{tab:cost-linear-l1}
\end{table*}

\begin{table*}[h]
\centering
\small
\begin{tabular}{m{2.5cm}|m{2cm}|m{2cm}|m{2cm}m{1cm}m{1cm}m{0.9cm}}
\toprule
\multirow{1}{*}{Methods} & \multicolumn{1}{c|}{German Credit} & \multicolumn{1}{c|}{COMPAS} & \multicolumn{1}{c}{Adult} \\
\midrule
SCFE    & 2.97 $\pm$ 1.14  & 2.01 $\pm$ 0.23  & 7.03 $\pm$ 0.87  \\
C-CHVAE & 12.56 $\pm$ 0.53 & 11.31$\pm$ 0.19 & 49.45 $\pm$ 1.88  \\  
ROAR    & 13.46 $\pm$ 0.25 & 2.25 $\pm$ 0.25  & 23.04 $\pm$ 3.68 \\ 
\ours (ours) & 7.85 $\pm$ 0.58 & 6.35 $\pm$ 0.14 & 19.61 $\pm$  3.07 \\ 
\bottomrule
\end{tabular}
\caption{Average cost of different recourse methods applied to neural network models on three datasets. The cost is measured in terms of L2 norm.}
\label{tab:cost-nonlinear-l1}
\end{table*}

\paragraph{Sensitivity analysis with respect to the hyperparameter $k$.} We also conduct a sensitivity analysis on different variants of our method with respect to the hyperparameter $k$. Figure~\ref{fig:validity-linear-sensitivity} shows the results on logistic regression models. All variants of $k$ achieves 100\% validity COMPAS and Adult. On German Credit, the variant with the lowest $k$ has a slight drop in validity for higher fraction of removal, which is fixed by for variants of higher $k$ values. Notably, $k=0.005 n$ corresponds to $\alpha=0.5\%$ and similarly for other values of $k$ and $\alpha$. The value of $k$ refers to the hyperparameter of our method while the value of $\alpha$ refers to the actual fraction of data removal in the evaluation. In reality, our selection of hyperparameter $k$ may not exactly match the actual fraction of removals in the future. However, we note that, in Figure~\ref{fig:validity-linear-sensitivity}, the variants of our method always achieve 100\% validity for $\alpha n$ below the hyperparamter $k$, e.g., $k=0.01 n$ achieves 100\% validity for any $\alpha \le 1\%$ and $k=0.02 n$ achieves 100\% validity for any $\alpha \le 2\%$.

\begin{figure*}[t!]
        \centering
        \begin{subfigure}[b]{0.31\textwidth}
            \centering
            \includegraphics[width=\textwidth]{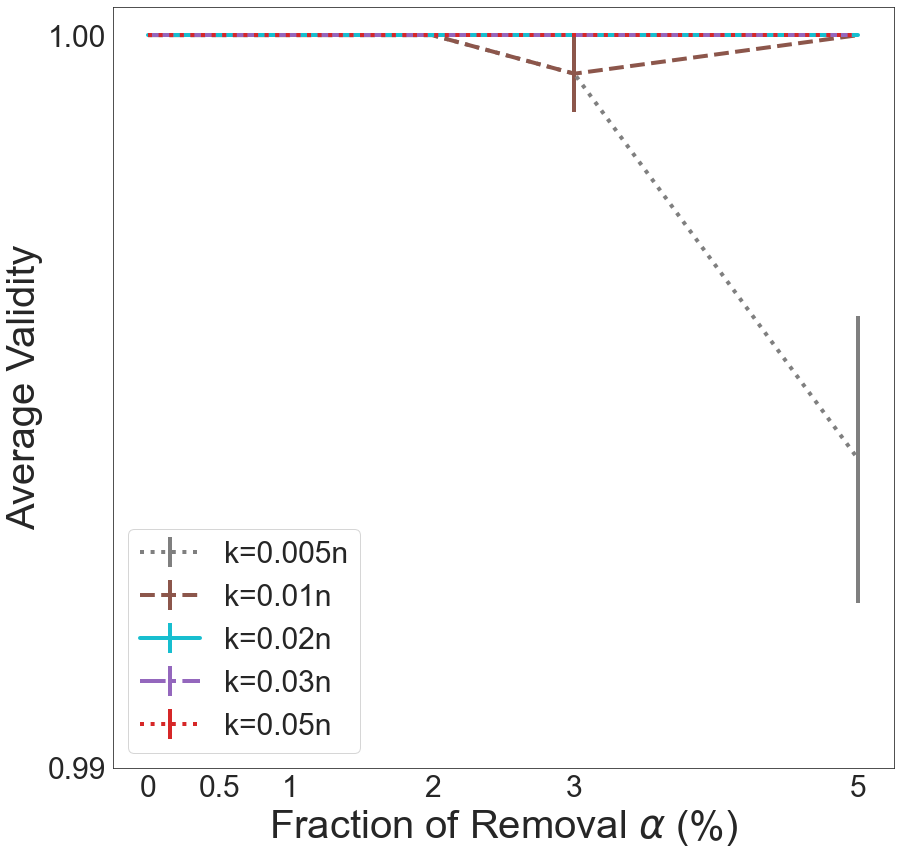}
            \caption{German Credit.}
            \label{fig:german-linear}
        \end{subfigure}
        \begin{subfigure}[b]{0.31\textwidth}
            \centering
            \includegraphics[width=\textwidth]{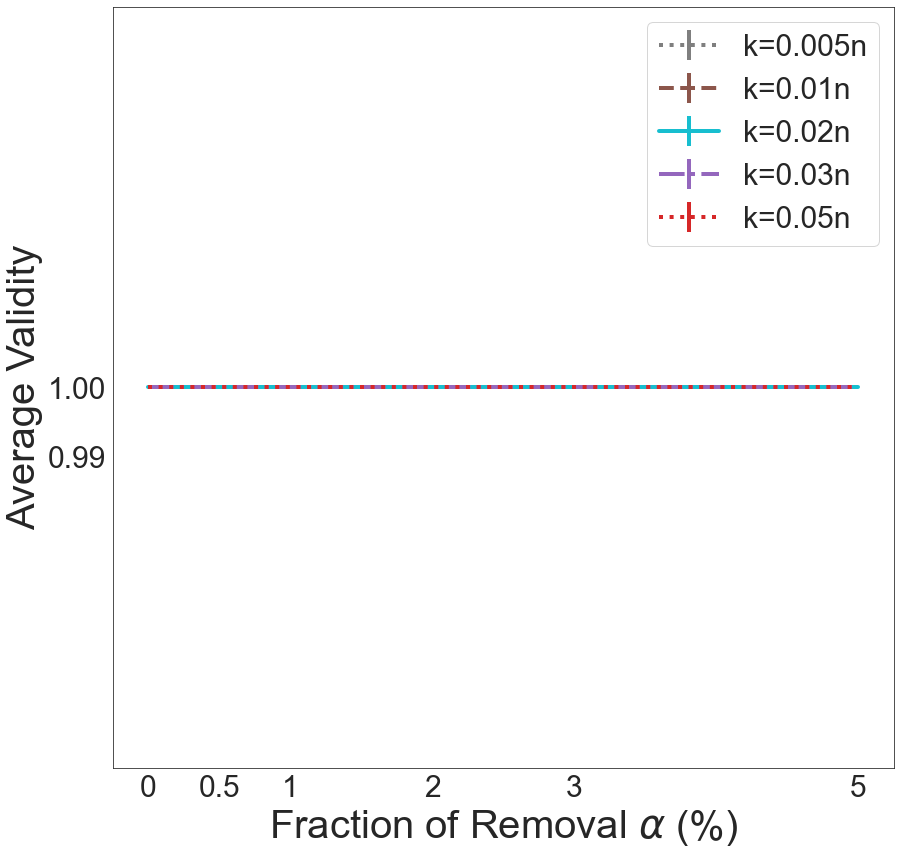}
            \caption{COMPAS.}
            \label{fig:compas-linear}
        \end{subfigure}
        \begin{subfigure}[b]{0.31\textwidth}
            \centering
            \includegraphics[width=\textwidth]{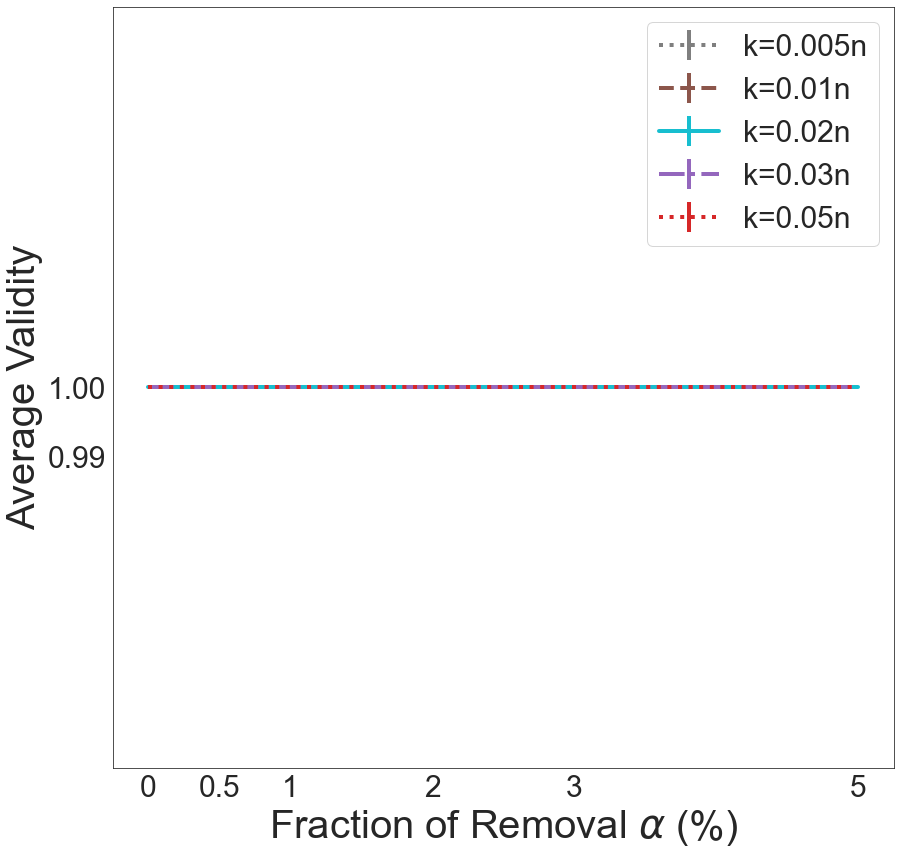}
            \caption{Adult.}
            \label{fig:adult-linear}
        \end{subfigure}
        \caption{Sensitivity analysis with respect to the hyperparameter $k$.}  %
        \label{fig:validity-linear-sensitivity}
\end{figure*}


\end{document}